\documentclass{article}

\PassOptionsToPackage{numbers, compress}{natbib}



\usepackage[final]{neurips_2025}


\usepackage[utf8]{inputenc} 
\usepackage[T1]{fontenc}    
\usepackage{hyperref}       
\usepackage{url}            
\usepackage{booktabs}       
\usepackage{amsfonts}       
\usepackage{nicefrac}       
\usepackage{microtype}      
\usepackage{xcolor}         
\usepackage{thmtools,thm-restate} 
\usepackage{amsthm} 
\usepackage{algorithm} 
\usepackage{algpseudocode} 

\usepackage{graphicx}
\usepackage{amsmath}


\usepackage{multirow}
\usepackage{color, colortbl}
\definecolor{Gray}{gray}{0.9}

\usepackage[most]{tcolorbox}

\newcommand{\xL}{\vx_{L}}

\newcommand{\xH}{\vx_{H}}



\usepackage{amsmath,amsfonts,bm}









\def\eqref#1{equation~\ref{#1}}









\def\1{\bm{1}}






\def\rmI{{\mathbf{I}}}



\def\vc{{\bm{c}}}

\def\vx{{\bm{x}}}



\DeclareMathAlphabet{\mathsfit}{\encodingdefault}{\sfdefault}{m}{sl}
\SetMathAlphabet{\mathsfit}{bold}{\encodingdefault}{\sfdefault}{bx}{n}

















\usepackage{capt-of}
\usepackage{diagbox}

\definecolor{C0}{rgb}{0.121569, 0.466667, 0.705882}
\definecolor{C1}{rgb}{1.000000, 0.498039, 0.054902}
\definecolor{C2}{rgb}{0.172549, 0.627451, 0.172549}
\definecolor{C3}{rgb}{0.839216, 0.152941, 0.156863}
\definecolor{C4}{rgb}{0.580392, 0.403922, 0.741176}
\definecolor{C5}{rgb}{0.549020, 0.337255, 0.294118}
\definecolor{C6}{rgb}{0.890196, 0.466667, 0.760784}
\definecolor{C7}{rgb}{0.498039, 0.498039, 0.498039}
\definecolor{C8}{rgb}{0.737255, 0.741176, 0.133333}
\definecolor{C9}{rgb}{0.090196, 0.745098, 0.811765}
\definecolor{trolleygrey}{rgb}{0.5, 0.5, 0.5}


\definecolor{BrickRed}{rgb}{0.6,0,0}
\definecolor{RoyalBlue}{rgb}{0,0,0.8}
\definecolor{Tdgreen}{rgb}{0,0.4,0.7}
\definecolor{pinegreen}{rgb}{0.0, 0.47, 0.44}
\definecolor{cornellred}{rgb}{0.7, 0.11, 0.11}
\definecolor{cadmiumgreen}{rgb}{0.0, 0.42, 0.24}
\definecolor{spirodiscoball}{rgb}{0.06, 0.75, 0.99}
\definecolor{mylightblue}{rgb}{0.85, 0.90, 0.94}
\definecolor{maroon}{cmyk}{0,0.87,0.68,0.32}

\usepackage{pifont}
%
%


\definecolor{cfg}{rgb}{0.906, 0.435, 0.318}
\definecolor{cfgpp}{rgb}{0.165, 0.616, 0.561}
\definecolor{cfgnull}{rgb}{0.208, 0.565, 0.953}



\def\eqref#1{Eq.~(\ref{#1})}

\title{Chain-of-Zoom: Extreme Super-Resolution via Scale Autoregression and Preference Alignment}

\author{
Bryan Sangwoo Kim \quad Jeongsol Kim \quad Jong Chul Ye \\
KAIST AI\\
{\tt\small \{bryanswkim, jeongsol, jong.ye\}@kaist.ac.kr} \\
}

\begin{document}

\maketitle

\begin{figure}[h]
    \centering
        \vspace{-1em}
    \includegraphics[width=\linewidth]{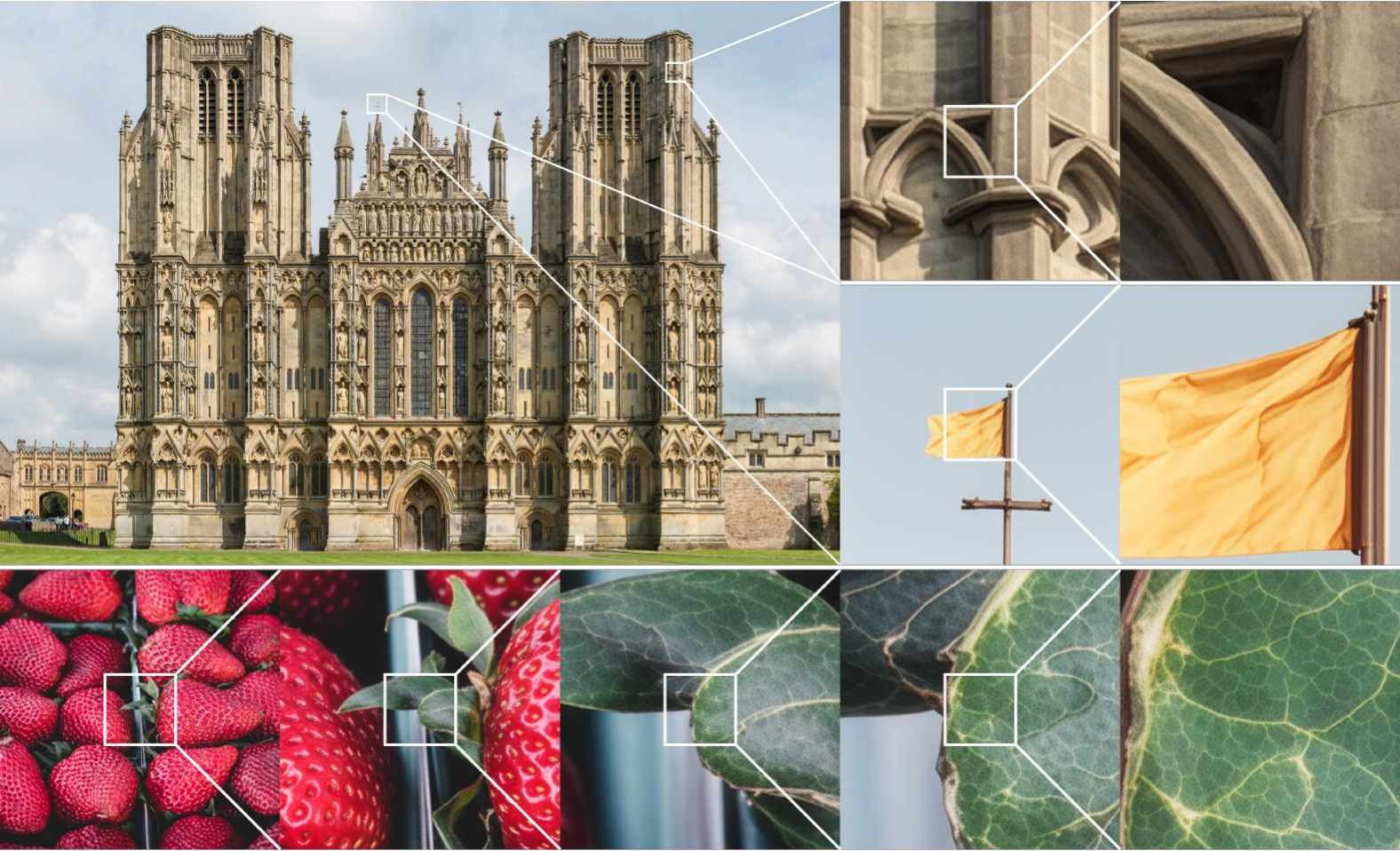}
    \vspace{-1em}
    \caption{Extreme super-resolution of photorealistic images by CoZ with up to 64× magnification (top) and 256× magnification (bottom). Fine details such as textures on a wall, wrinkles on a flag, and  leaf veins are clearly seen.}
    \label{fig:teaser}
\end{figure}

\begin{abstract}
  Modern single-image super-resolution (SISR) models deliver photo-realistic results at the scale factors on which they are trained, but collapse when asked to magnify far beyond that regime. We address this scalability bottleneck with \textit{Chain-of-Zoom (CoZ)}, a model-agnostic framework that factorizes SISR into an autoregressive chain of intermediate scale-states with multi-scale-aware prompts. CoZ repeatedly re-uses a backbone SR model, decomposing the conditional probability into tractable sub-problems to achieve extreme resolutions without additional training.
  Because visual cues diminish at high magnifications, we augment each zoom step with multi-scale-aware text prompts generated by a vision-language model (VLM). The prompt extractor itself is fine-tuned using  Generalized Reward Policy Optimization (GRPO) with a critic VLM, aligning text guidance towards human preference.
  Experiments show that a standard $4\times$ diffusion SR model wrapped in CoZ attains beyond $256\times$ enlargement with high perceptual quality and fidelity.
  Project Page: \url{https://bryanswkim.github.io/chain-of-zoom/}.
\end{abstract}

\section{Introduction}

The field of generative modeling has witnessed remarkable progress, enabling the synthesis of highly realistic data across various modalities, including images, text, and audio. A key application benefiting from these advancements is single-image super-resolution (SISR), which aims to reconstruct high-resolution (HR) details from a low-resolution (LR) input image. Super-resolution is a problem of core interest for effectively bridging the gap between low-cost imaging sensors and high-fidelity visual information; its usages range from enhancing consumer photographs and legacy media to improving critical details in medical imaging, satellite surveillance, and scientific visualization~\cite{betzig2006imaging, oktay2016multi, ravishankar2010mr, wagner2019deep, wang2022comprehensive}.
The standard approach to SISR is based on the posterior probability distribution:
\begin{equation}
    p(\xH \mid \xL)
    \label{eq:standardSR}
\end{equation}
where the goal is to sample a plausible HR image $\xH$ for a given input LR image $\xL$. However, the mapping from $\xL$ to $\xH$ is highly complex and fundamentally ill-posed: a single LR image can correspond to a multitude of plausible HR images. This makes directly modeling the distribution extremely challenging for large magnification factors, and early attempts relying on interpolation or regression often produced blurry results~\cite{dong2014learning, freeman2002example, keys2003cubic, yang2010image}. Recent emergence of powerful generative models (\textit{e.g.}, diffusion-based models) has led to significant advancement in this task, providing strong generative priors over natural images that enable the synthesis of realistic textures and details consistent with the low-resolution input.

Specifically, existing methods leveraging such generative priors largely fall into two categories. One line of work frames SR as an inverse problem, utilizing a pre-trained generative model as a prior during inference time to find a realistic HR image consistent with the LR input~\cite{chung2022improving, chung2023diffusion, chung2023prompt, chung2024decomposed, kim2023regularization, kim2025flowdps}. While such inverse problem-solving methods benefit from being training-free, they typically require lengthy iterative optimization or sampling processes at inference time to enforce data consistency (\textit{i.e.}, ensuring the downsampled HR prediction matches the original LR input), making them computationally expensive. Another line of work aims to incorporate this data consistency directly into the model's training objective, thereby enabling much faster inference~\cite{saharia2021image, wang2024exploiting, wu2024one, wu2024seesr, yu2024scaling, zhang2024degradation}. Modern state-of-the-art models within this category are capable of producing high-quality super-resolved images, even in a single inference step~\cite{wu2024one, zhang2024degradation}.

However, these fast, trained super-resolution models suffer from a significant limitation: they are inherently upper-bounded by their training configuration and tend to collapse when presented with inputs requiring magnification beyond what they were trained on~\cite{ledig2017photo, lim2017enhanced, zhang2018image}. This failure occurs because the model's internal representations and learned restoration functions are tightly coupled to the specific scale and degradation seen during training~\cite{shocher2018zero}. Applying it outside this domain violates its learned assumptions, leading to severe artifacts, blurry outputs, or a complete failure to generate meaningful high-frequency details~\cite{dong2015image, freeman2002example, keys2003cubic}. This lack of robustness severely restricts the practical applicability of these otherwise powerful models, demanding new models to be trained when the desired magnification factor exceeds what can be currently provided, which is highly inefficient.

\begin{figure}[!ht]
    \centering
    \includegraphics[width=.95\linewidth]{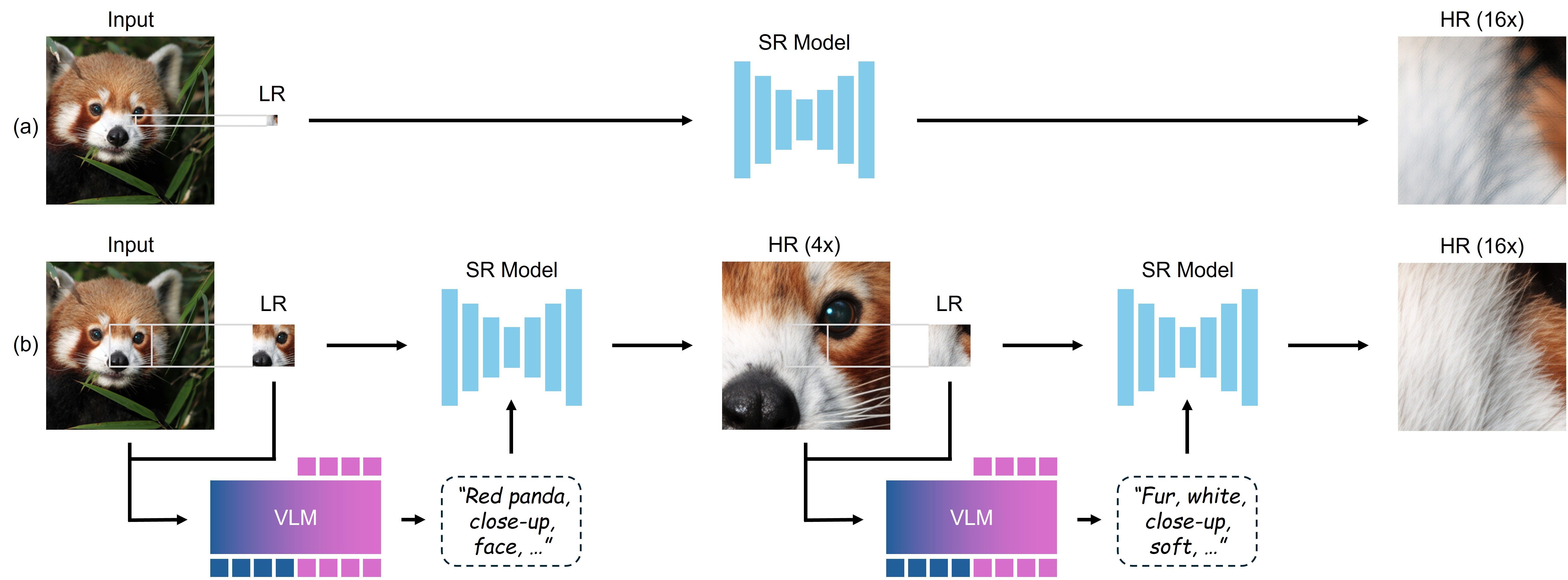}
    \caption{
    \textbf{(a) Conventional SR.} When an SR backbone trained for a fixed up-scale factor (\textit{e.g.}, 4$\times$) is pushed to much larger magnifications beyond its training regime, blur and artifacts are produced.
    \textbf{(b) Chain-of-Zoom (ours).}
    Starting from an LR input, a pretrained VLM generates a descriptive prompt, which—together with the image—is fed to the same SR backbone to yield the next HR scale-state. This prompt-and-upscale cycle is repeated, allowing a single off-the-shelf model to climb to extreme resolutions (16$\times$–256$\times$) while preserving sharp detail and semantic fidelity.}
    \label{fig:pipeline}
\end{figure}

In this work, we therefore propose to solve a fundamental question: \textit{How can we effectively utilize super-resolution models to explore much higher resolutions than they were originally trained for?} Solving this question is critical in that it addresses the practical need for flexible and arbitrary-scale super-resolution, allowing users to magnify images to desired levels without being constrained by model training specifics. Furthermore, training models for extremely high magnification factors (\textit{e.g.}, 16x, 32x) directly is often computationally prohibitive due to memory and time constraints~\cite{wang2020deep}. Enabling the extension of existing, well-trained models (\textit{e.g.}, 4x SR models) to higher factors offers a significantly more resource-efficient pathway to achieving extreme resolutions.

To address these fundamental challenges, we present \textit{Chain-of-Zoom (CoZ)}, a novel framework for achieving extreme-resolution image generation beyond the training configurations of conventional super-resolution models. Specifically, we introduce intermediate scale-state modeling to bridge the gap between a low-resolution (LR) input and a high-resolution (HR) target image. These intermediate scale-states enable the decomposition of the conditional distribution in \eqref{eq:standardSR} into a series of tractable components, forming the basis of a scale-level autoregressive (AR) framework. Within this framework, models can progressively generate high-quality images at resolutions previously considered unattainable.
In particular, building on the scale-level AR-2 model, we further propose a multi-scale-aware prompt extraction technique. This approach leverages Vision-Language Models (VLMs) to extract descriptive text prompts by attending to \textit{multiple} scale-states throughout the zooming process, enabling semantically aligned and coherent super-resolution.
This is from the observation that at extreme resolutions, conditioning provided by the original signal $\xL$ becomes insufficient, thus leading to unreasonable hallucinations by the SR model in cases. 

Furthermore, to obtain text prompts of even richer detail that aligns with human preference, we fine-tune the prompt-extraction VLM under a novel RLHF pipeline leveraging GRPO~\cite{shao2024deepseekmath}. A core part of this pipeline is the utilization of a critic VLM to score the outputs of the prompt extraction VLM, thus guiding it to produce prompts more aligned to human preference. Incorporated into the CoZ framework, our final VLM model successfully guides the super-resolution process towards reasonable high-quality results.

In summary, our contributions are as follows:
\begin{itemize}
    \item We present \textit{Chain-of-Zoom}, a scale-level autoregressive framework that decomposes super-resolution into a sequence of intermediate scale-states and multi-scale-aware prompts, enabling any existing SR model to reach much higher magnifications without retraining.
    \item We propose a novel RL pipeline for tuning prompt-extraction VLMs with GRPO. This pipeline incorporates appropriate reward functions and a critic reward model to endue multi-scale aware reasoning capabilities to the prompt-extraction VLM.
\end{itemize}

\section{Related Work}
\noindent\textbf{Multi-Scale Image Generation and Super-Resolution.}
Unconditional multi-scale generators synthesize ever-larger images by passing coarse outputs through successive refinement stages.
Cascaded Diffusion Models~\cite{ho2022cascaded} pioneer this coarse-to-fine pipeline, while AnyresGAN~\cite{chai2022any}, ScalespaceGAN~\cite{wolski2024learning}, Generative Powers of Ten~\cite{wang2024generative}, ZoomLDM~\cite{yellapragada2024zoomldm}, and Make-a-Cheap-Scaling~\cite{guo2024make} share weights across latent zoom levels to reach megapixel resolutions.
Because they are generation-based, these methods do not enforce consistency with a given low-resolution input.
For true SR, PULSE~\cite{menon2020pulse} searches a GAN latent space, and Zoomed In, Diffused Out~\cite{moser2024zoomed} alternates diffusion denoising with explicit up-sampling, but both do not explore extreme resolutions as in this work.

\noindent\textbf{Autoregressive Factorizations.}
Classic autoregressive models such as PixelCNN, PixelRNN~\cite{van2016conditional,van2016pixel} and VAR~\cite{tian2024visual} predict spatial tokens sequentially within a fixed resolution.
Pixel Recursive SR~\cite{dahl2017pixel} extends this to super-resolution by autoregressing over \emph{pixels} after each enlargement—effective for small factors but computationally prohibitive at extreme scales.
The proposed CoZ instead autoregresses over \textit{scale-states}: we factorize $p(\vx_H \mid \vx_L)$ into a tractable sequence of intermediate zoom distributions, enabling arbitrarily high magnifications without retraining at every factor.

\noindent\textbf{Diffusion-Based Super-Resolution.}
Diffusion models have become the de-facto approach for high-fidelity SISR.
SR3~\cite{saharia2021image} first denoised noisy HR guesses into realistic outputs with diffusion models.
StableSR~\cite{wang2024exploiting} reuses a diffusion prior for faster convergence, and prompt-aware variants (\textit{e.g.}, SeeSR~\cite{wu2024seesr}, SUPIR~\cite{yu2024scaling}) add textual conditioning to bolster semantic faithfulness. OSEDiff~\cite{wu2024one} distills the multi-step chain into a one-step denoising. Because of its accuracy and efficiency, we adopt OSEDiff as the backbone SR module in our CoZ demonstrations. However, CoZ is model-agnostic: the same scaling strategy can wrap any existing text-guided diffusion (or non-diffusion) SR network.

\noindent\textbf{RL for Vision–Language Guidance.}
Reinforcement learning with human feedback (RLHF) is now widely used to align VLM behaviour with user preference.  
Early vision-grounded efforts such as LLaVA-RLHF~\cite{sun2023aligning} and LLaVACritic~\cite{xiong2024llava} employ reward models or critic networks to refine image-conditioned dialogue. Generalized Reward Policy Optimization (GRPO) was introduced by \citet{shao2024deepseekmath} as a policy-space alternative to PPO~\cite{schulman2017proximal}. GRPO has since been adopted in vision tasks outside SR:  
Seg-Zero~\cite{liu2025seg} uses GRPO to train VLMs for open-set semantic segmentation, while MetaSpatial~\cite{pan2025metaspatial} applies it to 3-D spatial reasoning in virtual environments. Building on these precedents, we are the first to bring GRPO to prompt-extraction in super-resolution. Our pipeline fine-tunes a prompt-extraction VLM with a composite reward objective unexplored in prior SR work.


\section{Chain-of-Zoom}
\subsection{Intermediate Scale-State Modeling}
In the CoZ framework, we propose to bridge the gap between a target HR image $\vx_H \in \mathbb{R}^{d_n}$ and an input LR image $\vx_L \in \mathbb{R}^{d_0}$ by introducing intermediate scale-states $\vx_i\in\mathbb{R}^{d_i}$. 
Suppose that an image generative process is modeled as a sequence $(\vx_0, \vx_1, ..., \vx_n)$ where $\vx_0:=\vx_L$, $\vx_n := \vx_H$, and consecutive states have dimension ratio $s$ (i.e. $d_i = sd_{i-1}$) larger than 1. Under the Markov assumption, the joint distribution could be modeled as 
$p(\vx_0, \vx_1, ..., \vx_n) = p(\vx_0) \prod_{i=1}^n p(\vx_i|\vx_{i-1}).$
However, if the model follows a Markov chain structure, relying solely on the transition probability $p(\vx_i|\vx_{i-1})$ leads to loss of high-frequency details as $n$ increases (see Fig.~\ref{fig:semanticloss}). Inspired by recent work in inverse problems~\cite{chung2023prompt, kim2023regularization} that demonstrate the effectiveness of text embeddings in reducing the solution space and improving super-resolution between consecutive scales, we therefore introduce latent variables $\vc_i$ through text embeddings.
The text prompt extraction supplements information of the overall zoom process.

Importantly, to reduce hallucinations caused by incorrect text guidance across scale, we find that
multi-scale aware text extraction is necessary by feeding $\vx_{i-1}$ \textit{and} the coarser state $\vx_{i-2}$ in prompt generation,
leading to the conditional probability for the prompt:
\begin{equation}\label{eq:latent}
    p_\phi(\vc_i \mid \vx_{i-1}, \vx_{i-2}).
\end{equation}
Therefore, instead of using the Markov assumption,
we propose AR-2 modeling of the image generative process with multi-scale-aware prompts as latent variables:
\begin{align}
p(\vx_0, \vx_1, ..., \vx_n) &= p(\vx_0,\vx_1) \prod_{i=2}^n p(\vx_i|\vx_{i-1},\vx_{i-2}), \label{eq:mult} \\
    p(\vx_i|\vx_{i-1},\vx_{i-2})& = \int p(\vx_i|\vx_{i-1},\vx_{i-2}, \vc_i)p(\vc_i|\vx_{i-1},\vx_{i-2}) d\vc_i.
    \label{eqn:lv_marginal}
\end{align}
Then, the joint distribution of the sequence $(\vx_0, \vc_1, \vx_1, ..., \vc_n, \vx_n)$ is expressed as follows:
\begin{restatable}{prop}{joint}
    \label{prop:joint}
    Given a sequence of scale-states $\vx_i$ that follows a AR-2 structure and latent variables $\vc_i$ that satisfy \eqref{eq:latent}, the joint distribution is expressed as
    \begin{equation}
        p(\vx_0, \vc_1, \vx_1, ...,\vc_n, \vx_n) = p(\vx_0,\vc_1,\vx_1)\prod_{i=2}^n p(\vx_i|\vx_{i-1}, \vx_{i-2},\vc_i)p(\vc_i|\vx_{i-1},\vx_{i-2}). 
        \label{eqn:joint}
    \end{equation}
\end{restatable}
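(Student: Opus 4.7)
The plan is to derive the factorization directly from the chain rule of probability, and then successively apply the AR-2 structural assumption on $\vx_i$ together with the conditional independence assumption \eqref{eq:latent} on the prompt latents $\vc_i$. This is essentially a bookkeeping exercise on conditional independencies, once the correct graphical structure is identified: each $\vx_i$ ($i \geq 2$) has incoming edges only from $\vx_{i-1}, \vx_{i-2}, \vc_i$, while each $\vc_i$ has incoming edges only from $\vx_{i-1}, \vx_{i-2}$, so the implied DAG is a simple ``zig-zag'' in which all factors become local.

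First I would write out the unrestricted chain-rule decomposition for the ordered sequence $(\vx_0, \vx_1, \vc_2, \vx_2, \ldots, \vc_n, \vx_n)$, namely
\begin{equation}
p(\vx_0, \vx_1, \vc_2, \vx_2, \ldots, \vc_n, \vx_n) = p(\vx_0,\vx_1)\prod_{i=2}^n p(\vc_i \mid \vx_{<i}, \vc_{<i})\, p(\vx_i \mid \vx_{<i}, \vc_{\leq i}),
\end{equation}
where $\vx_{<i}$ and $\vc_{<i}$ abbreviate the history. Next I would invoke \eqref{eq:latent} to collapse $p(\vc_i \mid \vx_{<i}, \vc_{<i}) = p(\vc_i \mid \vx_{i-1}, \vx_{i-2})$, which reflects the assumption that the prompt extractor attends only to the two most recent scale-states. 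Then, using the AR-2 hypothesis on $\vx_i$ extended by the additional conditioning on $\vc_i$, I would reduce $p(\vx_i \mid \vx_{<i}, \vc_{\leq i}) = p(\vx_i \mid \vx_{i-1}, \vx_{i-2}, \vc_i)$. Substituting both simplifications into the chain-rule expansion gives exactly \eqref{eqn:joint}.

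Finally, I would include a short consistency check: marginalizing the resulting joint over $\vc_2, \ldots, \vc_n$ must recover the original AR-2 factorization \eqref{eq:mult}. This follows immediately from \eqref{eqn:lv_marginal}, since each factor $\int p(\vx_i \mid \vx_{i-1}, \vx_{i-2}, \vc_i)\, p(\vc_i \mid \vx_{i-1}, \vx_{i-2})\, d\vc_i = p(\vx_i \mid \vx_{i-1}, \vx_{i-2})$, which guarantees that the introduction of the latent prompts does not alter the marginal scale-state distribution.

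The only subtlety, and the one place I would be careful, concerns the boundary: the AR-2 hypothesis requires two predecessors, so the first two states $\vx_0, \vx_1$ must be handled by the joint $p(\vx_0, \vx_1)$ rather than by a transition kernel, and correspondingly no prompt latent $\vc_1$ enters the product (the $\vc_1$ appearing on the left-hand side of the stated expression is either absorbed into $p(\vx_0,\vx_1)$ or treated as vacuous, since the first transition has no second predecessor to condition on). Once this base case is fixed, the inductive step is routine and no heavy computation is required.
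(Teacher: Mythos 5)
Your derivation is correct, but it is a genuinely different route from the paper's. The paper proves Proposition~\ref{prop:joint} backwards: it substitutes \eqref{eqn:lv_marginal} into \eqref{eq:mult}, pulls the product inside the integrals via Fubini, and observes that the claimed right-hand side of \eqref{eqn:joint}, when integrated over $\vc_1,\dots,\vc_n$, recovers $p(\vx_0,\dots,\vx_n)$ --- i.e.\ it is a consistency verification that effectively takes the factorized expression as the definition of the latent-augmented joint. You instead derive the factorization forward from the exact chain rule on the ordered sequence and then collapse each conditional using the structural assumptions, relegating the paper's whole argument to your final ``consistency check.'' What your route buys is that it makes explicit precisely which conditional independencies are being assumed: \eqref{eq:latent} read as $p(\vc_i\mid\vx_{<i},\vc_{<i})=p(\vc_i\mid\vx_{i-1},\vx_{i-2})$, and, more importantly, $p(\vx_i\mid\vx_{<i},\vc_{\le i})=p(\vx_i\mid\vx_{i-1},\vx_{i-2},\vc_i)$, which is strictly stronger than the marginal AR-2 statement \eqref{eq:mult} together with \eqref{eqn:lv_marginal} (those two alone do not force independence of $\vx_i$ from older states and older prompts given $\vx_{i-1},\vx_{i-2},\vc_i$); under the intended Bayes-net reading of ``AR-2 structure with latents'' your step is exactly the standard DAG factorization, but it is worth stating that reading, since the paper's marginalization argument sidesteps it at the cost of only showing compatibility rather than necessity. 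Your handling of the boundary is also more careful than the paper's: you flag that $\vc_1$ on the left of \eqref{eqn:joint} has no matching factor when the product starts at $i=2$ and must be treated as vacuous or absorbed into $p(\vx_0,\vx_1)$, whereas the paper's proof silently runs its product from $i=1$ (with $\vx_{-1}$ implicitly null) even though the proposition's product starts at $i=2$, an indexing mismatch your convention resolves.
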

Now, our objective function is maximizing the likelihood of the entire joint distribution of $\vx_i$ and $\vc_i$. Taking the logarithm of \eqref{eqn:joint}, we get the objective function to be maximized:
\begin{align}
    \mathcal{L} &= \underbrace{\log p(\vx_0, \vc_1, \vx_1)}_{\mathcal{L}_{\text{init}}} + \underbrace{\sum_{i=2}^n \log p(\vx_i|\vx_{i-1}, \vx_{i-2}, \vc_i)}_{\mathcal{L}_{\text{SR}}} + \underbrace{\sum_{i=2}^n \log p(\vc_i|\vx_{i-1},\vx_{i-2})}_{\mathcal{L}_{\text{VLM}}}
    \label{eqn:objective}
\end{align}
where $\mathcal{L}_{\text{init}}$ represents the initial super-resolution step.

\subsection{Training Objective}
The additive structure of the components in \eqref{eqn:objective} allows for the independent optimization of each term. We achieve this via \textbf{next $\vx_i$ prediction} and \textbf{next $\vc_i$ prediction}, using parameterized models $\theta$ and $\phi$, respectively.

\noindent\textbf{Next $\vx_i$ prediction.}
The training objective $\mathcal{L}_{\text{SR}}$ represents the likelihood of $\vx_i$ given previous scale-states $\vx_{i-1},\vx_{i-2}$ and description $\vc_i$ for $\vx_i$. 
Under the assumption that the distribution $p(\vx_i|\vx_{i-1},\vx_{i-2},\vc_i):= \mathcal{N}(\vx_i; f_\theta(\vx_{i-1},\vx_{i-2}, \vc_i), \sigma^2 \rmI)$ is Gaussian, where the parameterized model $f_\theta$ predicts the conditional mean of the distribution, the likelihood of $\vx_i$ is equivalent to
\begin{align}
    \log p(\vx_i|\vx_{i-1}, \vx_{i-2},\vc_i) = -\frac{1}{2\sigma^2} \|\vx_i - f_\theta(\vx_{i-1}, \vx_{i-2},\vc_i)\|^2 + C
\end{align}
where $C=-\frac{d_i}{2}\log(2\pi\sigma^2)$.
To reduce the computational complexity of training $f_\theta$, our key idea is that its dependency to $\vx_{i-2}$ is only through
the multi-scale-aware prompt, i.e. $\vc_i=\vc_i(\vx_{i-1}, \vx_{i-2})$, leading to
$f_\theta(\vx_{i-1}, \vx_{i-2},\vc_i)=f_\theta(\vx_{i-1},\vc_i(\vx_{i-1}, \vx_{i-2}))$.
Maximizing the simplified likelihood thus reduces to minimizing the mean-squared error (MSE) between the predicted HR patch from $\vx_{i-1}$ and the ground truth—precisely the loss most SR backbones are already trained with. In this work, we perform experiments with a backbone SR model trained via settings in Sec.~\ref{sec:exp_settings}, yet our framework is model-agnostic.

\begin{figure}
    \centering
    \includegraphics[width=.49\linewidth]{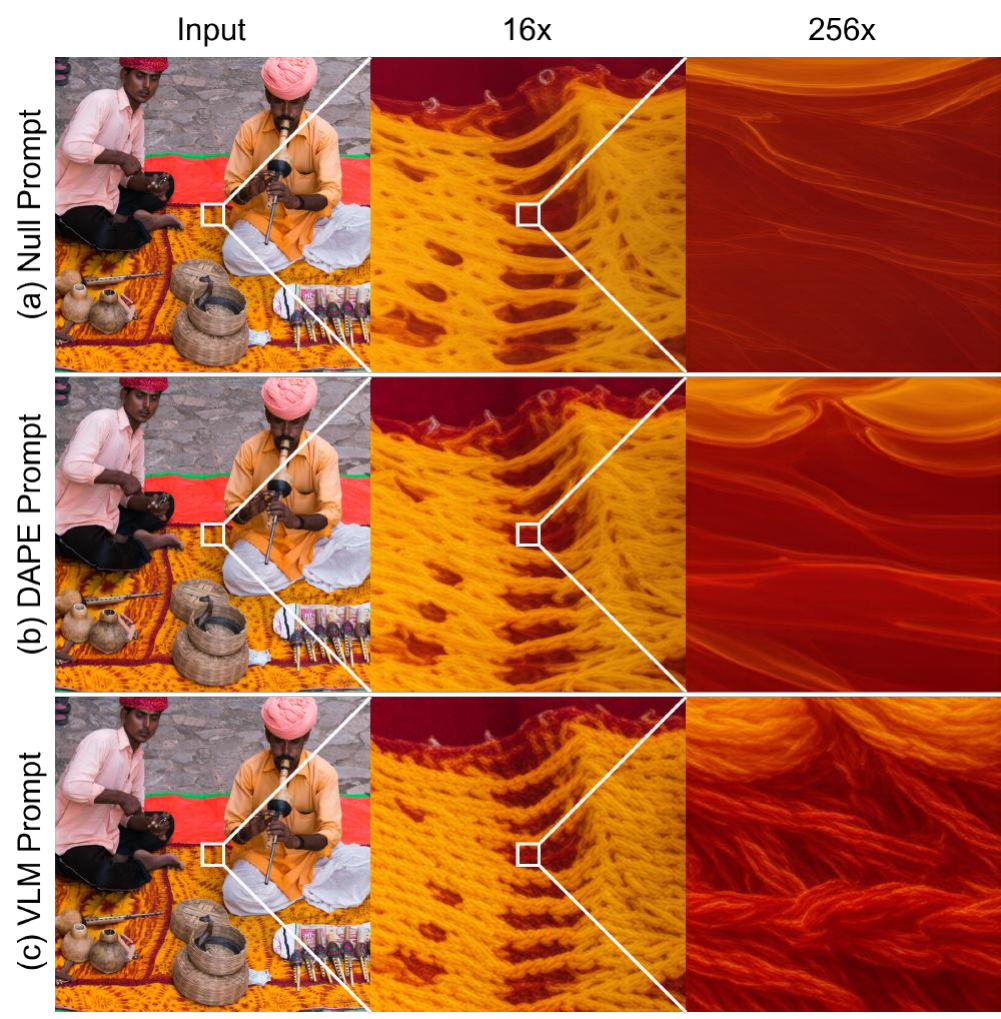}
    \includegraphics[width=.49\linewidth]{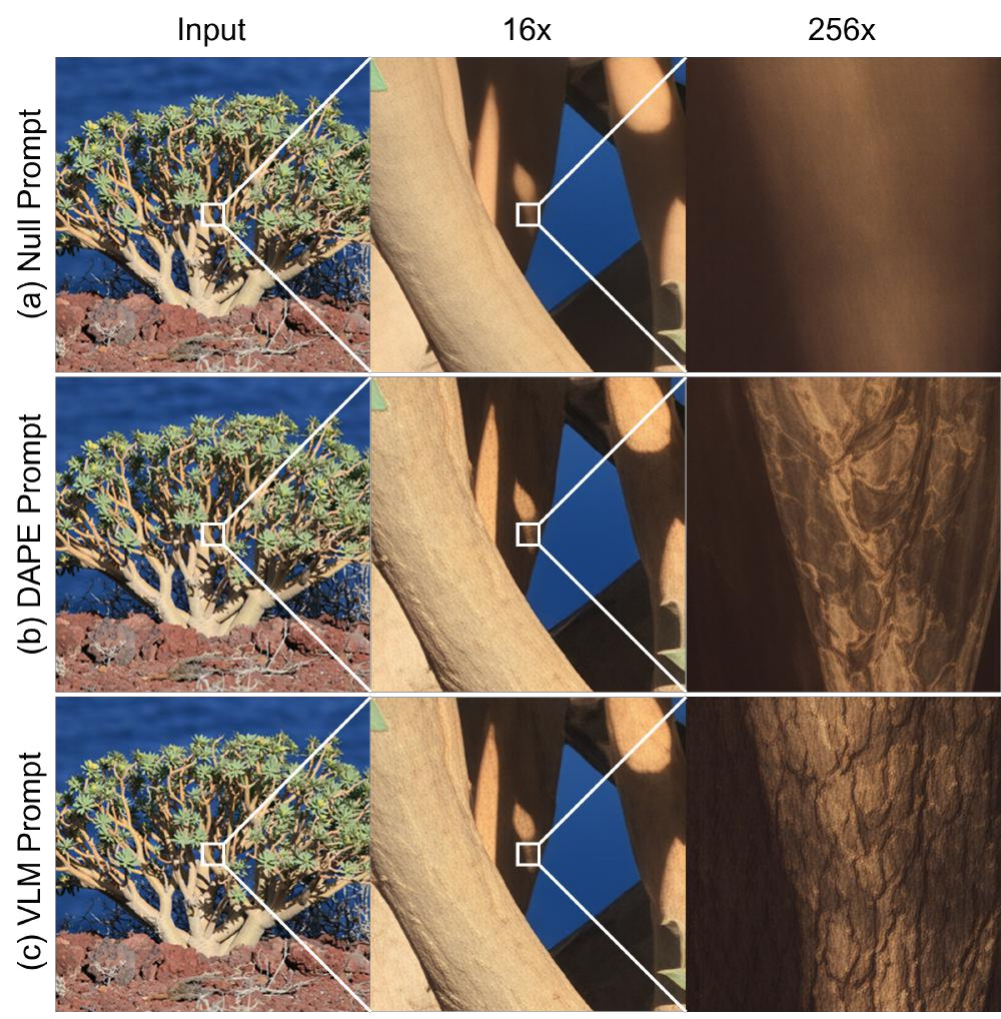}
    \caption{\textbf{Significance of proposed multi-scale-aware prompts:}
    \textit{(a) Null prompt}: coarse structure is retained, but high-frequency details are smoothed out.
    \textit{(b) DAPE prompt}: inserting text from a degradation-aware prompt extractor (DAPE) helps, yet the images lack intricate detail at large magnifications.
    \textit{(c) VLM-generated prompts (ours)}: multi-scale prompts extracted by a VLM steer the SR backbone to synthesize realistic textures and crisp details. }
    \label{fig:semanticloss}
\end{figure}

\noindent\textbf{Next $\vc_i$ prediction.}
Recall that the dependency to the $\vx_{i-2}$ in AR-2 model is through the multi-scale aware prompt extraction, which
 supplements information of the overall zoom process and reduces hallucinations caused by incorrect text guidance.
For a single zoom step \textit{i}, the prompt $\vc_i=(c_{i,1}, \cdots, c_{i, T_i})$ is a token sequence conditioned on the current and previous image, i.e. $\vx_{i-1},\vx_{i-2}$. Modern VLMs model this distribution autoregressively:
\begin{equation}
    p_\phi(\vc_i \mid \vx_{i-1}, \vx_{i-2}) = \prod_{t=1}^{T_i}p_\phi(c_{i,t} \mid \vx_{i-1}, , \vx_{i-2}, c_{i, <t})
\end{equation}
where $c_{i, <t}=(c_{i,1}, \cdots, c_{i, t-1})$.
Maximizing the log-likelihood $\log p_\phi(\vc_i \mid \vx_{i-1},\vx_{i-2})$ therefore amounts to minimizing the negative log-likelihood (cross-entropy) for each token:
\begin{equation}
    \mathcal{L}_{\text{VLM}}^{(i)}
    =-\log p_\phi(\vc_i \mid \vx_{i-1}, \vx_{i-2})
    =-\sum_{t=1}^{T_i}\log p_\phi(c_{i,t} \mid \vx_{i-1},  \vx_{i-2}, c_{i,<t}).
    \label{eqn:lvlm}
\end{equation}
\eqref{eqn:lvlm} is exactly the standard next-token cross-entropy loss used to pre-train modern VLMs; hence our framework can employ any off-the-shelf VLM whose weights already maximize this objective.

\noindent\textbf{Inference.}
Given pre-trained parameterized models $\theta$ and $\phi$, the sequence ($\vx_0$, $\vc_1$, $\vx_1$, ..., $\vc_n$, $\vx_n$) can be generated recursively.
Starting from the low-resolution image $\vx_L = \vx_0$, a description for the next scale, $\vc_1 \sim p_\phi(\vc_1 \mid \vx_0)$, is first sampled. Then, the next scale state is generated by sampling $\vx_1 \sim p_\theta(\vx_1 \mid \vx_0, \vc_1)$.
For subsequent steps, the description at scale $i$ is sampled as $\vc_i \sim p_\phi(\vc_i \mid \vx_{i-1}, \vx_{i-2})$, followed by sampling the image at that scale as $\vx_i \sim p_\theta(\vx_i \mid \vx_{i-1}, \vx_{i-2}, \vc_i)$.
This sequential sampling process generates  specific, plausible high-resolution outputs $\vx_n$ without needing to model the full marginal distribution $p(\vx_0,...,\vx_n)$ explicitly.
When using SR backbone models that require input and output dimensions to be identical (\textit{e.g.}, Stable-diffusion-based SR models~\cite{wang2024exploiting, wu2024one, wu2024seesr, yu2024scaling}), a fixed-size window is cropped from the HR image and resized to the required dimension. Thus, super-resolution operates in local regions, and achieving outputs of entire images would require multiple runs of CoZ.

\begin{figure}
    \centering
    \includegraphics[width=.9\linewidth]{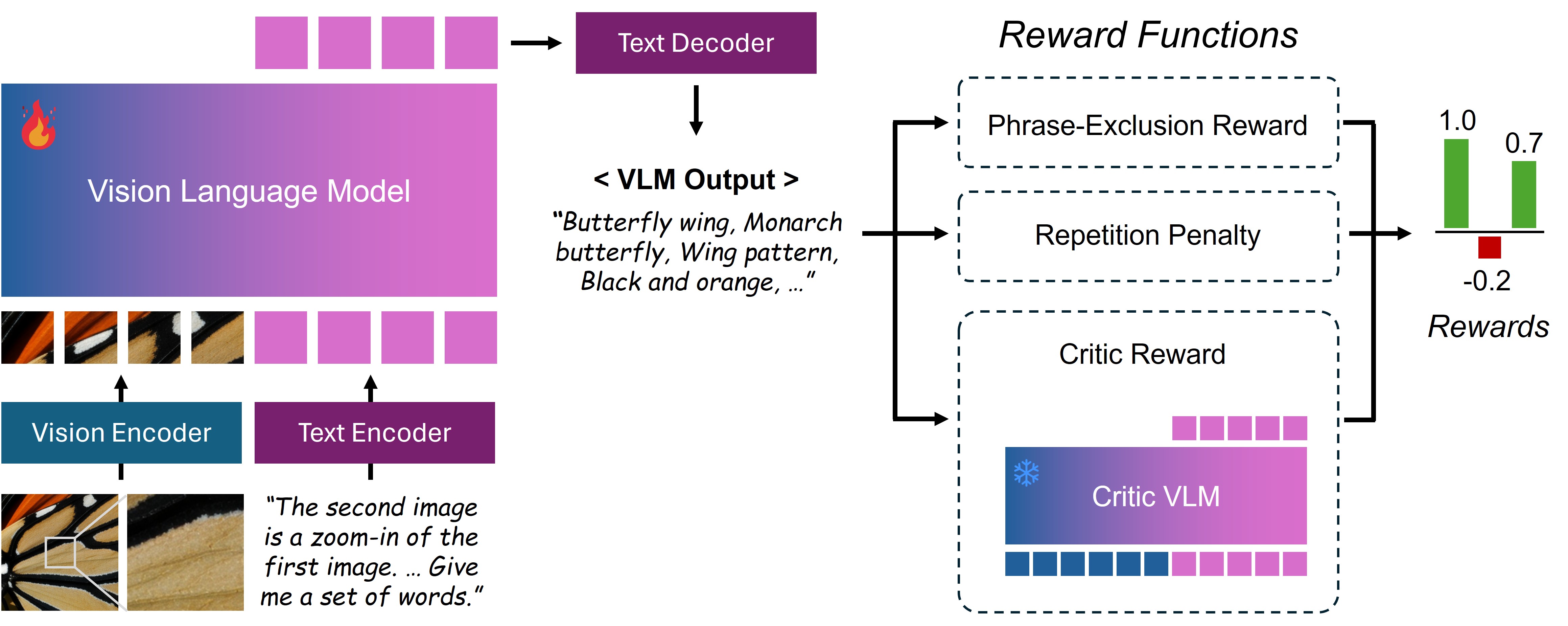}
    \caption{\textbf{GRPO Training Framework.} At every zoom step, multi-scale image crops are fed to the base VLM, which generates candidate prompts after perceiving input images. A critic VLM scores the prompt for semantic quality, while phrase-exclusion and repetition penalties enforce conciseness and relevance. The weighted sum of these rewards forms the GRPO signal that iteratively fine-tunes the base VLM, steering it towards prompts that best guide extreme-scale super-resolution.}
    \label{fig:grpo-pipeline}
\end{figure}

\subsection{Training Multi-Scale-Aware Prompt Extraction using RL}
\label{sec:rlhf_prompt}

At extreme magnification factors, the visual evidence in the input image becomes extremely sparse, causing the SR backbone model to rely more heavily on text prompts. To curb the ensuing drift towards implausible high-frequency hallucinations, we fine-tune the prompt-extraction VLM so that its textual guidance aligns with human aesthetic and semantic preferences.
Our fine-tuning pipeline (Fig.~\ref{fig:grpo-pipeline}) adopts Generalized Reward Policy Optimization (GRPO). For each zoom step $i$, the VLM receives multi-scale image crops $(\vx_{i-2}, \vx_{i-1})$ and produces a candidate prompt $\vc_i$. The prompt is scored by a set of task-specific reward functions, and the weighted sum $R(\vc_i)$ drives the GRPO update to align the VLM prompts with human preference.
The overall reward $R(\vc_i)$ is a weighted sum of three components, each targeting a distinct failure mode observed during preliminary experiments:
\begin{equation}
    R(\vc_i) = w_{\text{critic}}\,R_{\text{critic}}
    + w_{\text{phrase}}\,R_{\text{phrase}}
    + w_{\text{rep}}\,R_{\text{rep}}
\end{equation}

\noindent\textbf{Critic Preference Reward ($R_{\text{critic}}$).}
A stronger vision–language critic VLM judges the candidate prompt in the context of the input multi-scale image crops and assigns a raw score in $[0,100]$. We linearly rescale this score to $[0,1]$ and treat it as a proxy for human preference, thereby imbuing the prompt-extraction VLM with the critic VLM’s higher-level semantic priors.

\noindent\textbf{Phrase-Exclusion Reward ($R_{\text{phrase}}$).}
Multi-image conditioning occasionally leads the prompt-extraction VLM to emit viewpoint markers such as “\textit{first image}” or “\textit{second image},” which are meaningless to the downstream SR model. We therefore issue a reward of 1 if none of a predefined blacklist of such phrases appear, and 0 otherwise.

\noindent\textbf{Repetition Penalty ($R_{\text{rep}}$).}
Following \citet{yeo2025demystifyinglongchainofthoughtreasoning}, we compute the fraction of repeated $n$-grams in the prompt and give a negative reward (down to $-1$) for a higher repetition ratio.

\section{Experiments}

\subsection{Experimental Settings}
\label{sec:exp_settings}

We adopt the setup of prior work~\cite{wu2024seesr, wu2024one} and train OSEDiff~\cite{wu2024one} as the backbone SR model with the LSDIR~\cite{li2023lsdir} dataset and 10K images from FFHQ~\cite{karras2019style}. We use Stable Diffusion 3.0~\cite{esser2024scaling} as the backbone diffusion model and adopt a coarse-to-fine training strategy: first training on random degradation, and then training specifically for $4\times$ magnifications. Text guidance is provided by Degradation-Aware Prompt Extractor (DAPE)~\cite{wu2024seesr} as the naive prompt extractor, while Qwen2.5-VL-3B-Instruct~\cite{qwen2.5-VL} is used as the prompt-extraction VLM. RLHF training with GRPO is performed with InternVL2.5-8B~\cite{chen2024internvl} as the critic VLM. The same dataset used for training the backbone SR model is also used for GRPO training, and weights are given as: $w_{\text{critic}}=1.0$, $w_{\text{phrase}}=0.5$, $w_{\text{rep}}=0.5$.

Evaluation is performed on the training datasets of DIV2K~\cite{agustsson2017ntire} and DIV8K~\cite{gu2019div8k}, consisting of 800 images and 1500 images, respectively. Each image is resized and center-cropped to resolution of $512\times512$ to be input to the SR model. For four recursions, the HR image of the previous zoom is center-cropped and resized by a scale of 4 back to the resolution of $512\times512$.

\subsection{Comparison Results}

We perform comparison across four recursions for various methods. Specifically, we compare between nearest neighbor interpolation, direct magnification via one-step SR, and three versions of the proposed CoZ leveraging different prompts (\textit{i.e.}, Null, DAPE, VLM).

\begin{figure}
    \centering
    \includegraphics[width=\linewidth]{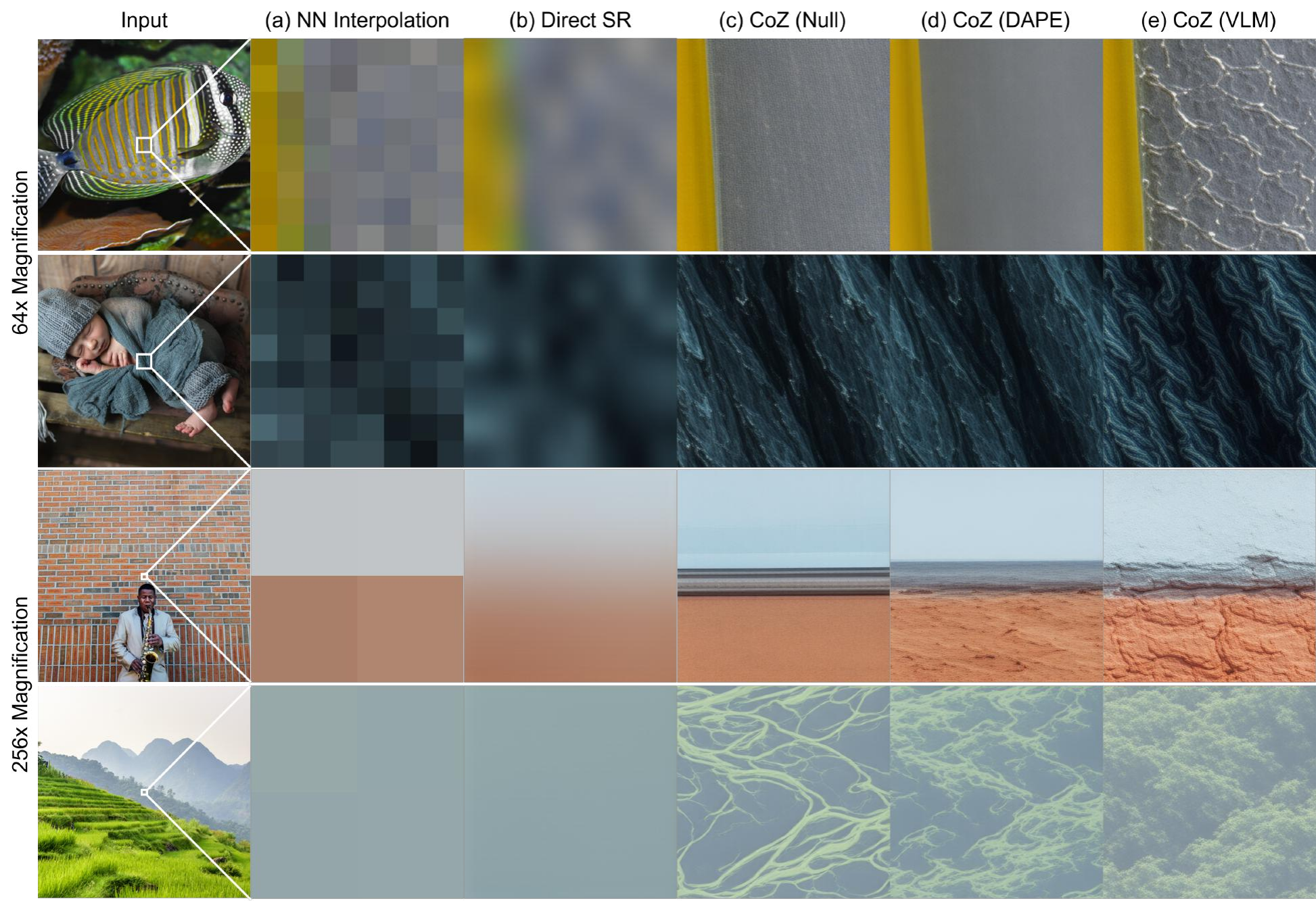}
    \vspace*{-0.3cm}
    \caption{\textbf{Qualitative Results.} For each input image, super-resolution is performed on different magnifications with various methods: \textbf{(a) Nearest neighbor interpolation}; \textbf{(b) One-step direct SR} with the backbone SR model; \textbf{(c-e) Variants of CoZ} with different text prompts. The CoZ framework shows significantly better performance at large magnifications. Furthermore, using CoZ with VLM prompts assists the SR model in generating realistic details without hallucinations.}
    \label{fig:qualitative}
\end{figure}

\noindent\textbf{Qualitative Comparison.} Qualitative results in Fig.~\ref{fig:qualitative} show that nearest neighbor interpolation and one-step direct SR fall off at higher scales, while CoZ variants produce images of better quality. Thus, incorporating VLM prompts helps overcome the sparsity of the original input signal.

\noindent\textbf{Quantitative Comparison.} Quantitative results are given in Tab.~\ref{tab:quantitative}. Due to the non-availability of ground-truth images for $256\times$ magnifications, we follow \cite{menon2020pulse} and evaluate performance on no-reference perceptual metrics. Specifically, we use the metrics NIQE~\cite{zhang2015feature}, MUSIQ~\cite{ke2021musiq}, MANIQA-pipal~\cite{yang2022maniqa}, CLIPIQA~\cite{wang2023exploring} for a thorough evaluation. At low scales (\textit{i.e.}, Scale 4$\times$), difference between methods is minimal, but at high scales (\textit{i.e.}, Scales 64$\times$, 256$\times$) the proposed framework shows consistently better performance. Furthermore, prompts by DAPE show comparable performance at low scales but fall off at higher scales, while VLM-generated prompts exhibit significantly better performance, supporting our claim that prompt-extraction by VLMs make up for the deficient visual conditioning provided by the initial image.

\noindent\textbf{Quantitative Comparison with Baseline Methods.} In Tab.~\ref{tab:quant-baselines}, we further provide quantitative comparison with baseline methods: arbitrary-scale SR methods (LIIF~\cite{chen2021learning}) and direct super-resolution of diffusion-based SR methods (SeeSR~\cite{wu2024seesr}, S3Diff~\cite{zhang2024degradation}). All three baselines show greatly degraded performance at high magnifications. For the case of S3Diff, we additionally show quantitative results for applying CoZ to the pretrained, freely accessible S3Diff, leveraging our multi-scale aware GRPO fine-tuned VLM. All results clearly confirm the significant performance improvement by CoZ.
Additional results for performing CoZ with OSEDiff leveraging the Stable Diffusion v2.1 backbone is provided in Appendix~\ref{sec:appendix-osediffv2}.

\begin{table}
    \centering
    \caption{Quantitative comparison on no-reference metrics. \textbf{Bold}: best, \underline{Underline}: second-best.}
    \resizebox{\linewidth}{!}{
        \begin{tabular}{ll|cccc|cccc}
        \toprule
        && \multicolumn{4}{c}{\textbf{DIV2K}} & \multicolumn{4}{c}{\textbf{DIV8K}} \\
        Scale & Method & NIQE$\downarrow$ & MUSIQ$\uparrow$ & MANIQA$\uparrow$ & CLIPIQA$\uparrow$ & NIQE$\downarrow$ & MUSIQ$\uparrow$ & MANIQA$\uparrow$ & CLIPIQA$\uparrow$ \\
        
        \midrule
        4$\times$
        & NN Interpolation
        & 12.1252 & 39.96 & 0.3396 & 0.2630 & 13.1984 & 40.26 & 0.3472 & 0.2672 \\
        & Direct SR
        & 4.7320 & 67.00 & \underline{0.6344} & \underline{0.7005} & 4.8631 & \underline{66.29} & \underline{0.6359} & \underline{0.6946} \\
        & CoZ (Null)
        & 4.7706 & 66.99 & 0.6309 & 0.6977 & 4.9011 & 66.23 & 0.6325 & 0.6897 \\
        & CoZ (DAPE)
        & \underline{4.7312} & \underline{67.01} & \underline{0.6344} & 0.7004 & \underline{4.8607} & \underline{66.29} & \underline{0.6359} & \underline{0.6946} \\
        \rowcolor{Gray}
        & CoZ (VLM)
        & \textbf{4.6572} & \textbf{67.10} & \textbf{0.6360} & \textbf{0.7017} & \textbf{4.8099} & \textbf{66.37} & \textbf{0.6370} & \textbf{0.6953} \\

        \midrule
        16$\times$
        & NN Interpolation
        & 22.1215 & 24.01 & 0.3378 & 0.2346 & 22.2744 & 24.94 & 0.3465 & 0.2585 \\
        & Direct SR
        & 7.2183 & 51.25 & 0.5406 & 0.6080 & 7.5855 & 50.17 & 0.5473 & 0.6035 \\
        & CoZ (Null)
        & \underline{6.5016} & \textbf{59.19} & 0.5859 & \textbf{0.6686} & \underline{6.7898} & \textbf{58.04} & 0.5881 & \underline{0.6618} \\
        & CoZ (DAPE)
        & 6.5456 & \underline{58.83} & \underline{0.5946} & \underline{0.6609} & 6.8607 & 57.79 & \underline{0.5964} & \textbf{0.6628} \\
        \rowcolor{Gray}
        & CoZ (VLM)
        & \textbf{6.3957} & 58.81 & \textbf{0.5970} & 0.6574 & \textbf{6.6500} & \underline{57.99} & \textbf{0.6006} & 0.6615 \\

        \midrule
        64$\times$
        & NN Interpolation
        & 27.4051 & 37.69 & 0.3803 & 0.3690 & 27.7533 & 37.13 & 0.3861 & 0.3837 \\
        & Direct SR
        & 16.5915 & 22.54 & 0.3995 & 0.4309 & 16.5874 & 22.97 & 0.4069 & 0.4451 \\
        & CoZ (Null)
        & \underline{8.3500} & \underline{51.82} & 0.5627 & \underline{0.6305} & \underline{8.5694} & \underline{50.96} & 0.5638 & 0.6240 \\
        & CoZ (DAPE)
        & 8.6598 & 51.77 & \underline{0.5726} & 0.6262 & 8.7669 & 50.40 & \underline{0.5714} & \underline{0.6274} \\
        \rowcolor{Gray}
        & CoZ (VLM)
        & \textbf{8.2335} & \textbf{52.13} & \textbf{0.5788} & \textbf{0.6315} & \textbf{8.2992} & \textbf{51.20} & \textbf{0.5787} & \textbf{0.6282} \\

        \midrule
        256$\times$
        & NN Interpolation
        & 34.8461 & 27.01 & 0.4179 & 0.5259 & 37.2612 & 26.98 & 0.4184 & 0.5299 \\
        & Direct SR
        & 16.1749 & 28.89 & 0.4470 & 0.5196 & 15.8667 & 28.90 & 0.4464 & 0.5256 \\
        & CoZ (Null)
        & \underline{10.0456} & \underline{46.28} & 0.5510 & 0.5857 & \underline{10.0630} & \underline{46.56} & 0.5479 & 0.5899 \\
        & CoZ (DAPE)
        & 10.4569 & 46.22 & \underline{0.5564} & \underline{0.5889} & 10.2788 & 45.81 & \underline{0.5535} & \underline{0.5984} \\
        \rowcolor{Gray}
        & CoZ (VLM)
        & \textbf{9.8260} & \textbf{47.83} & \textbf{0.5692} & \textbf{0.5986} & \textbf{9.6405} & \textbf{47.25} & \textbf{0.5646} & \textbf{0.6041} \\
        
        \bottomrule
        \end{tabular}
    }
    \label{tab:quantitative}
\end{table}

\subsection{GRPO for VLM}

\noindent\textbf{GRPO Training.}
Reward graphs for training the prompt-extraction VLM with different critic VLMs are shown in Fig.~\ref{fig:reward} and Fig.~\ref{fig:reward-qwen}. When using InternVL2.5-8B~\cite{chen2024internvl} as the critic VLM, phrase exclusion reward and repetition penalty converge to 1.00 and 0.00 (respectively) in the early stages of training, while the critic reward increases gradually throughout the training process.
Similar trends are observed when using Qwen2.5-VL-7B-Instruct~\cite{qwen2.5-VL} as the critic VLM, proving the robustness of our method.
Additional quantitative results for this case is provided in Appendix~\ref{sec:appendix-qwencritic}.

\begin{table}[h]
    \centering
    \caption{Quantitative comparison with baseline methods. \textbf{Best}, \underline{Second-Best}.}
    \resizebox{\linewidth}{!}{
        \begin{tabular}{ll|cccc|cccc}
        \toprule
        && \multicolumn{4}{c}{\textbf{DIV2K}} & \multicolumn{4}{c}{\textbf{DIV8K}} \\
        Scale & Method & NIQE$\downarrow$ & MUSIQ$\uparrow$ & MANIQA$\uparrow$ & CLIPIQA$\uparrow$ & NIQE$\downarrow$ & MUSIQ$\uparrow$ & MANIQA$\uparrow$ & CLIPIQA$\uparrow$ \\
        
        \midrule
        4$\times$
        & LIIF
        & 6.6210 & 57.47 & 0.5456 & 0.4587 & 6.8050 & 55.52 & 0.5386 & 0.4545 \\
        & SeeSR
        & 5.5208 & 57.56 & 0.5387 & 0.5535 & 5.5940 & 55.50 & 0.5346 & 0.5385 \\
        & S3Diff
        & 4.9803 & 65.82 & 0.6361 & 0.6596 & 5.0305 & 64.08 & 0.6328 & 0.6481 \\
        \rowcolor{Gray}
        & S3Diff + CoZ
        & \textbf{4.8637} & \textbf{67.18} & \textbf{0.6459} & \textbf{0.6835} & \textbf{4.9414} & \textbf{65.31} & \textbf{0.6405} & \textbf{0.6680} \\

        \midrule
        16$\times$
        & LIIF
        & 11.4815 &	25.94 & 0.2860 & 0.3024 & 11.8734 & 26.73 & 0.2896 & 0.3178 \\
        & SeeSR
        & 9.6798 & 41.68 & 0.4310 & 0.4669 & 9.8865 & 40.02 & 0.4210 & 0.4601 \\
        & S3Diff
        & 6.7383 & 51.14 & 0.5305 & 0.5886 & 7.2399 & 50.32 & 0.5370 & 0.5841 \\
        \rowcolor{Gray}
        & S3Diff + CoZ
        & \textbf{6.7310} & \textbf{56.82} & \textbf{0.5752} & \textbf{0.6168} & \textbf{6.8698} & \textbf{56.46} & \textbf{0.5836} & \textbf{0.6218} \\

        \midrule
        64$\times$
        & LIIF
        & 16.9215 & 20.02 & 0.3451 & 0.4131 & 17.4912 & 20.43 & 0.3522 & 0.4250 \\
        & SeeSR
        & 16.9095 & 21.83 & 0.4106 & 0.4193 & 16.9275 & 22.68 & 0.4167 & 0.4309 \\
        & S3Diff
        & 16.1421 & 21.54 & 0.3893 & 0.4917 & 16.5506 & 22.00 & 0.3968 & 0.5089 \\
        \rowcolor{Gray}
        & S3Diff + CoZ
        & \textbf{8.7770} & \textbf{48.90} & \textbf{0.5490} & \textbf{0.5801} & \textbf{8.9794} & \textbf{48.07} & \textbf{0.5560} & \textbf{0.5909} \\

        \midrule
        256$\times$
        & LIIF
        & 23.8949 & 26.05 & 0.4108 & 0.5380 & 24.3300 & 26.01 & 0.4116 & 0.5423 \\
        & SeeSR
        & 20.9635 & 25.81 & 0.4438 & 0.5193 & 19.9628 & 25.94 & 0.4429 & 0.5203 \\
        & S3Diff
        & 16.7809 & 25.92 & 0.4324 & 0.5359 & 16.9952 & 25.91 & 0.4312 & 0.5415 \\
        \rowcolor{Gray}
        & S3Diff + CoZ
        & \textbf{10.7668} & \textbf{43.59} & \textbf{0.5438} & \textbf{0.5570} & \textbf{10.5001} & \textbf{43.31} & \textbf{0.5417} & \textbf{0.5673} \\
        
        \bottomrule
        \end{tabular}
    }
    \label{tab:quant-baselines}
\end{table}

\begin{figure}
    \centering
    \includegraphics[width=\linewidth]{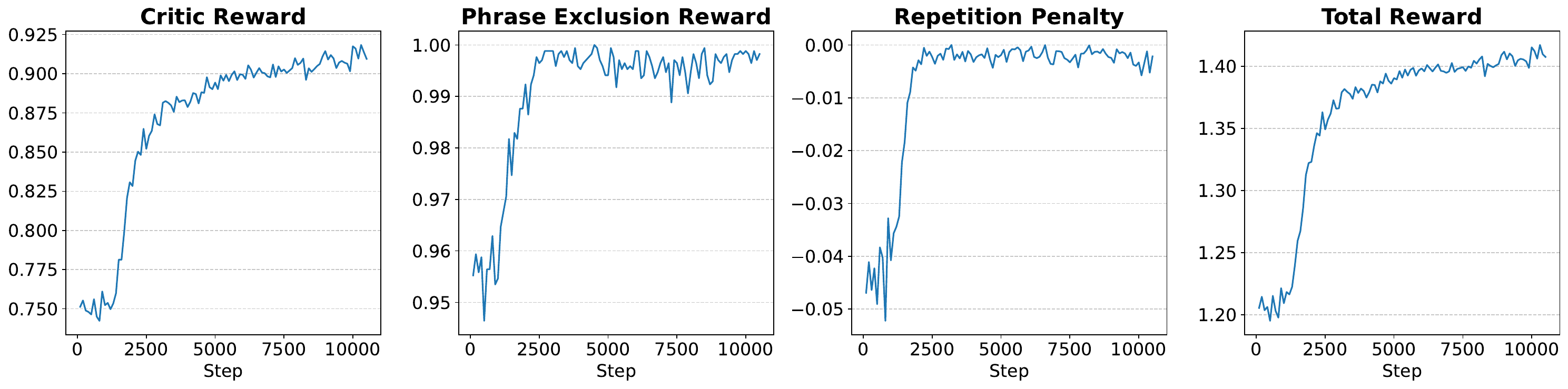}
       \vspace*{-0.5cm}
    \caption{Reward graphs of using \textbf{InternVL2.5-8B} as the critic VLM, evaluated on a validation set. Values for Critic Reward, Phrase Exclusion Reward, Repetition Penalty, and Total Reward increase throughout the training process.}
    \label{fig:reward}
\end{figure}

\begin{figure}
    \centering
    \includegraphics[width=\linewidth]{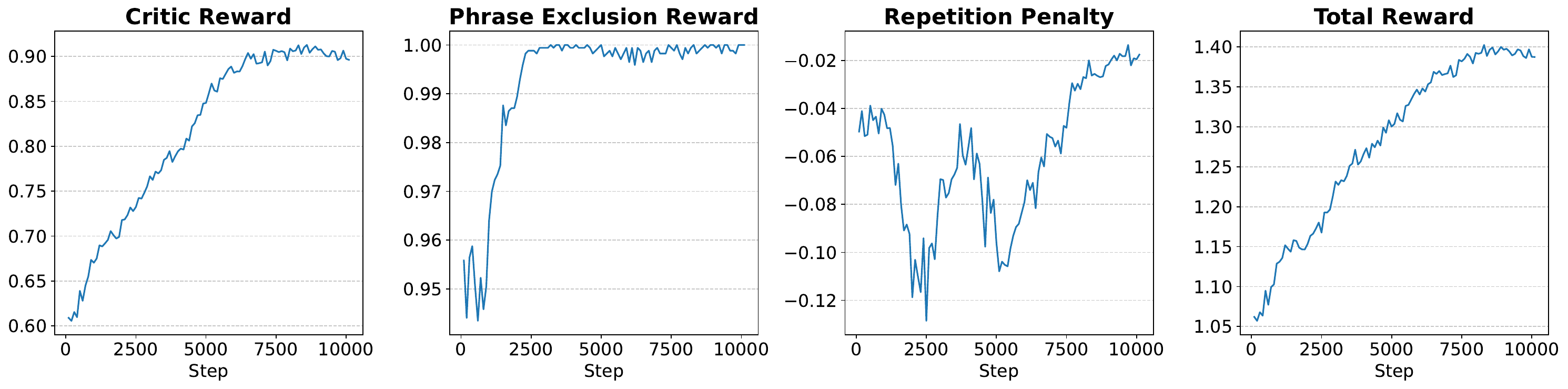}
       \vspace*{-0.5cm}
    \caption{Reward graphs of using \textbf{Qwen2.5-VL-7B-Instruct} as the critic VLM, evaluated on a validation set. Values for Critic Reward, Phrase Exclusion Reward, Repetition Penalty, and Total Reward increase throughout the training process.}
    \label{fig:reward-qwen}
\end{figure}

\begin{figure}
    \centering
    \includegraphics[width=\linewidth]{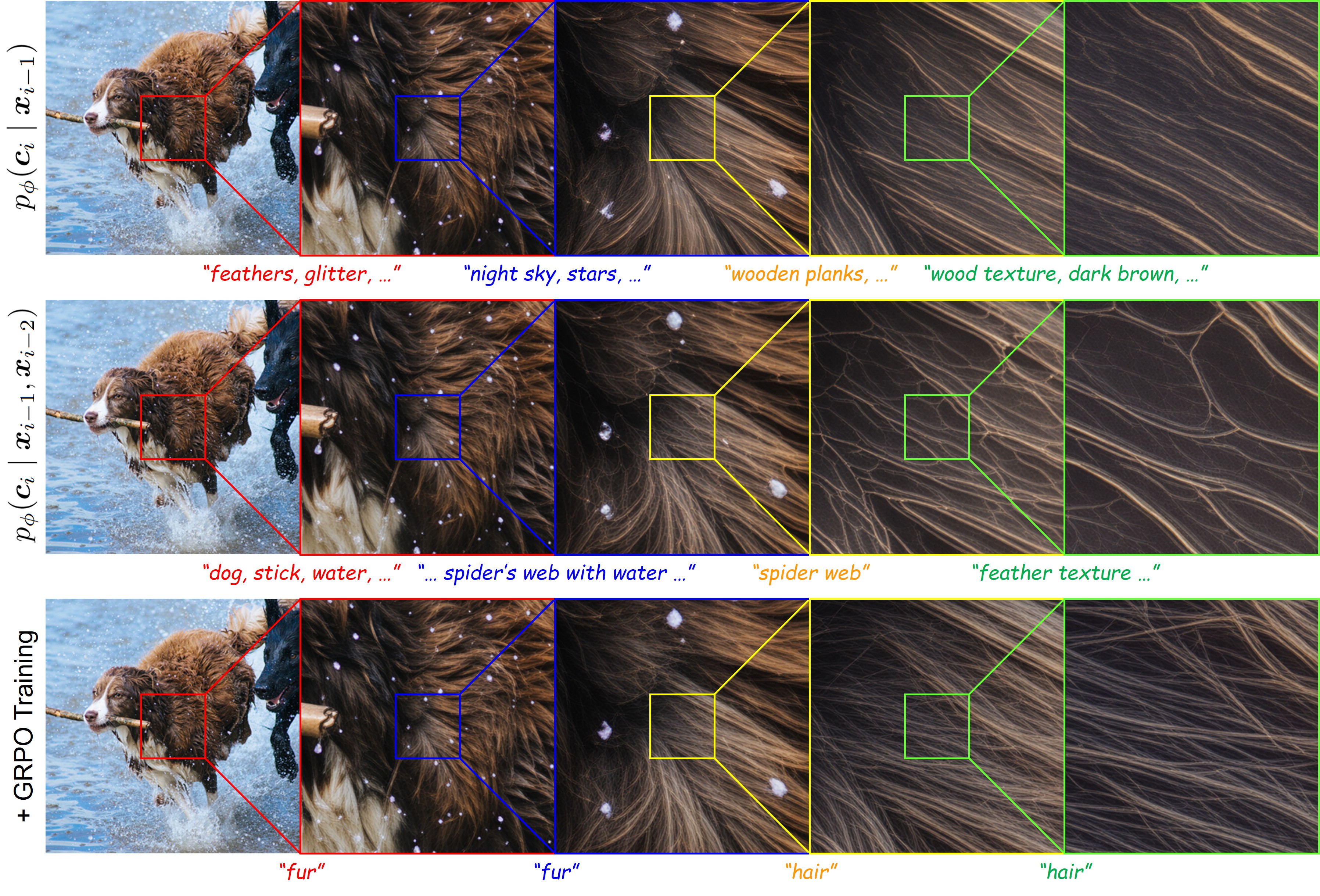}
        \vspace*{-0.5cm}
    \caption{RLHF training with GRPO assists the prompt-extraction VLM in creating meaningful prompts for accurate guidance. (Top) \textbf{Base VLM}: generating prompts only from the LR input causes unwanted hallucinations as shown by the incorrect prompts; (Middle) \textbf{Multi-scale image prompts} are helpful at low scales (\textit{e.g.}, accurate prompt of "dog, stick, water, ...") but fail at high scales; (Bottom) \textbf{VLM aligned with human preference} guides samples with improved text guidance. }
    \label{fig:preference}
\end{figure}

\noindent\textbf{Preference Alignment.} Using an off-the-shelf VLM for prompt-extraction can cause unwanted hallucinations to occur in the zoom process. An example case is shown in Fig.~\ref{fig:preference} (Top), where the off-the-shelf VLM generates improper prompts due to insufficient knowledge of the initial image at high magnifications. By inducing the VLM to generate multi-scale-aware prompts by conditioning on $(\vx_{i-1}, \vx_{i-2})$, we can produce more suitable prompts Fig.~\ref{fig:preference} (Middle).
Finally, using the VLM fine-tuned with GRPO we can produce high-quality samples while reducing unwanted hallucinations as in Fig.~\ref{fig:preference} (Bottom). 
We further prove that the VLM after undergoing GRPO training is better aligned with human preference through user study. For this, we follow prior work~\cite{menon2020pulse}, and perform a MOS (mean-opinion-score) test on various samples. Results and details are included in Appendix~\ref{sec:appendix-userstudy}.

\subsection{Runtime Analysis}

Runtime analysis for direct super-resolution and CoZ variants across varying scales (4$\times$ to 256$\times$) is given in Tab.~\ref{tab:runtime-analysis}. The average inference time required (in seconds) to apply CoZ on a single image is evaluated on 500 images of the DIV2K dataset. We divide inference into two phases: super-resolution (SR) and prompt extraction (PE); computational time required for each phase is analyzed accordingly.

\begin{table}
    \centering
    \footnotesize
    \caption{Runtime analysis for different methods (seconds).}
    \begin{tabular}{lc|cccc}
        \toprule
        Scale & Phase & Direct SR (DAPE) & CoZ (Null) & CoZ (DAPE) & CoZ (VLM) \\

        \midrule
        4$\times$
        & SR & 0.1467 & 0.1443 & 0.1460 & 0.1445 \\
        & PE & 0.0136 & 0.0000 & 0.0130 & 0.3777 \\

        \midrule
        16$\times$
        & SR & 0.1462 & 0.2886 & 0.2912 & 0.2896 \\
        & PE & 0.0130 & 0.0000 & 0.0254 & 0.7068 \\

        \midrule
        64$\times$
        & SR & 0.1462 & 0.4329 & 0.4363 & 0.4349 \\
        & PE & 0.0131 & 0.0000 & 0.0382 & 1.0301 \\

        \midrule
        256$\times$
        & SR & 0.1462 & 0.5774 & 0.5816 & 0.5802 \\
        & PE & 0.0132 & 0.0000 & 0.0509 & 1.3505 \\
        
        \bottomrule
    \end{tabular}
    \label{tab:runtime-analysis}
\end{table}

\section{Conclusion}

This paper tackles the long-standing scalability gap in single-image super-resolution: state-of-the-art models excel at their trained scale factors yet fail when asked to enlarge images far beyond that range. 
Specifically, we introduced \textit{Chain-of-Zoom (CoZ)}, a scale-level autoregressive framework that transforms any existing SR backbone into an extreme-magnification engine by decomposing the LR to HR mapping into a sequence of intermediate scale-states and multi-scale-aware prompts. CoZ is model-agnostic, requires no retraining of the base network, and thus offers a cost-effective path up to extreme resolutions. 
In particular, to maintain semantic coherence as visual evidence thins out, we leverage a multi-scale-aware prompt extractor driven by a VLM fine-tuned through a GRPO-based RLHF pipeline. Overall, CoZ yields sharp, realistic results at extreme scales while keeping inference efficient. By decoupling super-resolution performance from fixed training magnifications and demonstrating the value of aligned textual guidance, our work opens new avenues for resource-frugal image enhancement and lays a foundation for future exploration of learned zoom policies, domain-specific reward functions, and adaptive backbone selection.

\noindent\textbf{Limitation and Potential Negative Impacts.}
While CoZ enables extreme super-resolution with high visual fidelity, it requires repeated application for extreme magnification, which may cause error accumulation over iterations.
Moreover, high-fidelity generation from low-resolution inputs may raise concern regarding misinformation or unauthorized reconstruction of sensitive visual data.

\begin{ack}

This work was supported by the National Research Foundation of Korea under Grant RS-2024-00336454, and by the Institute of Information \& Communications Technology Planning \& Evaluation (IITP) grant funded by the Korean government (MSIT) (No. RS-2024-00457882, AI Research Hub Project). This work was also supported by the IITP (Institute of Information \& Communications Technology Planning \& Evaluation)-ITRC (Information Technology Research Center) grant funded by the Korea government (Ministry of Science and ICT) (IITP-2025-RS-2020-II201461).

\end{ack}

{
    \small
    \bibliographystyle{plainnat}
    \bibliography{main}
}

\clearpage
\appendix

\section{Proofs}
\label{suppl:proof}

\joint*
\begin{proof}
By substituting \eqref{eqn:lv_marginal} to \eqref{eq:mult} and expanding the base term $p(\vx_0,\vx_1)$, we obtain:
\begin{align*}
    &\left[\int p(\vx_0,\vc_1,\vx_1)d\vc_1\right] \prod_{i=2}^n \left[\int p(\vx_i|\vx_{i-1}, \vx_{i-2}, \vc_i)p(\vc_i|\vx_{i-1}, \vx_{i-2})d\vc_i \right] \\
    &= \int \cdots \int \left[ p(\vx_0,\vc_1,\vx_1)\prod_{i=2}^n p(\vx_i|\vx_{i-1}, \vx_{i-2}, \vc_i)p(\vc_i|\vx_{i-1},  \vx_{i-2}) \right] d\vc_1 \cdots d\vc_n\\
    &= \int \cdots \int p(\vx_0, \vc_1, \vx_1,...,\vc_n, \vx_n) d\vc_1 \cdots d\vc_n \\
    & =p(\vx_0,...,\vx_n) 
\end{align*}
This confirms the joint distribution in \eqref{eqn:joint} is consistent with the marginal AR-2 process.
\end{proof}

\section{Experimental Details}
\label{sec:exp-details}

\subsection{Model Checkpoints}

We use the pretrained VLM models Qwen2.5-VL-3B-Instruct and InternVL2.5-8B, available at \url{https://huggingface.co/Qwen/Qwen2.5-VL-3B-Instruct} and \url{https://huggingface.co/OpenGVLab/InternVL2_5-8B}, respectively. We also use the pretrained Stable Diffusion 3.0 model available at \url{https://huggingface.co/stabilityai/stable-diffusion-3-medium}. Evaluation is performed using the script for testing IQA (Image Quality Assessment) in \url{https://github.com/cswry/OSEDiff}.

\subsection{User Prompts}

The user prompt used for the base VLM is as follows:

\begin{tcolorbox}[
    colback=black!5!white,
    colframe=black!75!white,
    fonttitle=\bfseries,
    breakable,
    arc=1mm,
    boxsep=1mm,
    left=2mm,
    right=2mm,
    top=2mm,
    bottom=2mm
]
    \raggedright
    The second image is a zoom-in of the first image.
    Based on this knowledge, what is in the second image? Give me a set of words. \\
\end{tcolorbox}

The user prompt used for the critic VLM is as follows:

\begin{tcolorbox}[
    colback=black!5!white,
    colframe=black!75!white,
    fonttitle=\bfseries,
    breakable,
    arc=1mm,
    boxsep=1mm,
    left=2mm,
    right=2mm,
    top=2mm,
    bottom=2mm
]
    \raggedright
    First Image: \texttt{<image>} \\
    Second Image: \texttt{<image>} \\
    The second image is a zoom-in of the first image. Please rate the quality of the following description on how well it describes the second image. Output only a single score between 0 and 100.\\
    Description: \texttt{<Output of Base VLM>}\\
    Rating (0-100):
\end{tcolorbox}

\subsection{Other Settings}

The backbone SR model is trained based on the training scheme of OSEDiff~\cite{wu2024one}, with Stable Diffusion 3.0 as the backbone diffusion model. We train using four NVIDIA GeForce RTX 3090 GPUs with the LSDIR~\cite{li2023lsdir} dataset and 10K images from FFHQ~\cite{karras2019style}. Coarse-to-fine training is used: random degradation (same setting as OSEDiff) for 25K iterations, then $4\times$ specific upscaling for 20K iterations. Other settings (\textit{e.g.}, batch size, learning rate, etc.) follow the default settings of OSEDiff.

The VLM model is GRPO fine-tuned using four NVIDIA GeForce RTX 3090 GPUs with the LSDIR dataset, with a train/validation split ratio of 0.01 (\textit{i.e.}, 849 images for validation). Specifically, the Qwen2.5-VL-3B-Instruct model is LoRA fine-tuned (Rank: 8, Alpha: 32, Dropout: 0.05), with two generations per prompt for 10K global steps. The SWIFT~\cite{zhao2025swift} infrastructure was used for this process. Reward graphs during training for the validation set are given in Fig.~\ref{fig:reward} of the main paper.

Evaluation is performed with the code provided in \cite{wu2024one}, modified for no-reference metric evaluation. For occasional failure cases, worst values are given for each metric (100.0 for NIQE, 0.0 for others).

\section{Algorithms}
The following algorithms are provided:
\begin{itemize}
    \item Algorithm~\ref{alg:main}: the main algorithm for Chain-of-Zoom inference.
    \item Algorithm~\ref{alg:grpo}: the algorithm for GRPO-based human preference alignment training of VLMs.
\end{itemize}

\begin{algorithm}[h]
\caption{Chain-of-Zoom Inference}\label{alg:main}
\begin{algorithmic}[1]
\Require Low resolution image $\vx_L$, Super-resolution model $p_\theta$, VLM $p_\phi$, Number of recursions $n$
\Ensure High resolution image $\vx_n$
\State $\vx_0 \gets \vx_L$
\For{$i: 1\rightarrow n$}
    \If{$i=1$}
        \State $\vc_i \gets p_\phi(\vc_i|\vx_{i-1})$
    \Else
        \State $\vc_i \gets p_\phi(\vc_i|\vx_{i-1}, \vx_{i-2})$
    \EndIf
    \State $\vx_i \gets p_\theta(\vx_i|\vx_{i-1}, \vc_i)$
\EndFor
\end{algorithmic}
\end{algorithm}

\begin{algorithm}[h]
\caption{GRPO-based RL Training of Prompt-Extraction VLM}\label{alg:grpo}
\begin{algorithmic}[1]
\Require Base (prompt-extraction) VLM $p_\phi$ with parameters $\phi$, Critic VLM $V_\text{critic}$, Phrase blacklist $B_\text{phrase}$ for $R_\text{phrase}$, Number of training iterations $N_\text{iter}$, Number of generations per prompt $N_\text{gen}$, Training dataset $D = \{(\vx_{k-2}^{(j)}, \vx_{k-1}^{(j)})\}_{j=1}^M$ of multi-scale image crop pairs
\Ensure Fine-tuned prompt-extraction VLM $p_\phi$

\For{iteration $t:1\rightarrow N_\text{iter}$}
\For{generation $g:1\rightarrow N_\text{gen}$}
    \State Sample a multi-scale image pair $(\vx_{i-2}, \vx_{i-1})$ from $D$
    \State Generate candidate prompt $\vc_i^{(g)} \sim p_\phi(\cdot | \vx_{i-1}, \vx_{i-2})$
    
    \State $s_\text{critic} \gets V_\text{critic}(\vc_i^{(g)} \mid \vx_{i-1}, \vx_{i-2})$
    \Comment{Critic VLM scores prompt, range $[0,100]$}
    \State $R_\text{critic} \gets \text{Rescale}(s_{\text{critic}}, 0, 1)$
    \Comment{Rescale score to $[0,1]$}
    
    \State $R_{\text{phrase}} \gets 1$
    \ForAll{$b \in B_\text{phrase}$}
        \If{phrase $b$ is in $\vc_i^{(g)}$}
            \State $R_\text{phrase} \gets 0$
            \State \textbf{break}
    \EndIf
    \EndFor

    \State $R_\text{rep} \gets -\text{FractionOfRepeatedNgrams}(\vc_i^{(g)})$
    \Comment{Repetition Penalty, range $[-1,0]$}
    
    \State $R(\vc_i^{(g)}) \gets w_\text{critic} R_\text{critic} + w_\text{phrase} R_\text{phrase} + w_\text{rep} R_{\text{rep}}$
    \Comment{Total weighted reward}
\EndFor

    \State $ \hat{A}^{(g)} \gets R(\vc_i^{(g)}) - \frac{1}{N_\text{gen}}\sum_{n=1}^{N_\text{gen}} R(\vc_i^{(n)}) $
    \Comment{Group-based advantage estimation}
    \State Calculate $\mathcal{L}_\text{GRPO}(\phi)$ with estimated advantages $\hat{A}^{(g)}$
    \Comment{Detailed procedure in \cite{shao2024deepseekmath}}
    \State Update parameters $\phi$ of $p_\phi$ using GRPO policy update with $\mathcal{L}_\text{GRPO}(\phi)$
\EndFor
\end{algorithmic}
\end{algorithm}

\section{User Study}
\label{sec:appendix-userstudy}

We further prove that GRPO fine-tuning of the VLM enhances human preference alignment by performing a MOS (mean-opinion-score) test on various samples for 25 human participants.
Specifically, we compare between three different VLM prompts: (i) prompts generated from only the LR input (\textit{i.e.}, $p_\phi(c_i \mid x_{i-1})$); (ii) prompts generated from multi-scale image prompts (\textit{i.e.}, $p_\phi(c_i \mid x_{i-1}, x_{i-2})$); and (iii) prompts generated after GRPO fine-tuning (\textit{i.e.}, $p_\phi(c_i \mid x_{i-1}, x_{i-2})$ with RL-trained $\phi$).

Example questions are provided in Fig.~\ref{fig:userstudy}.
After being given a set of instructions, each user was asked to evaluate five different sets of randomly mixed zoom sequences and five different sets of randomly mixed text generations.
Users expressed their preference from `Very Bad' to `Very Good', and the preferences were converted to a score of 1 to 5.
Resulting preference scores are shown in Fig.~\ref{fig:user}.
We further conduct pair-wise t-test to confirm the statistical significance of the scores.

\begin{figure}[h]
    \centering
    \includegraphics[width=\linewidth]{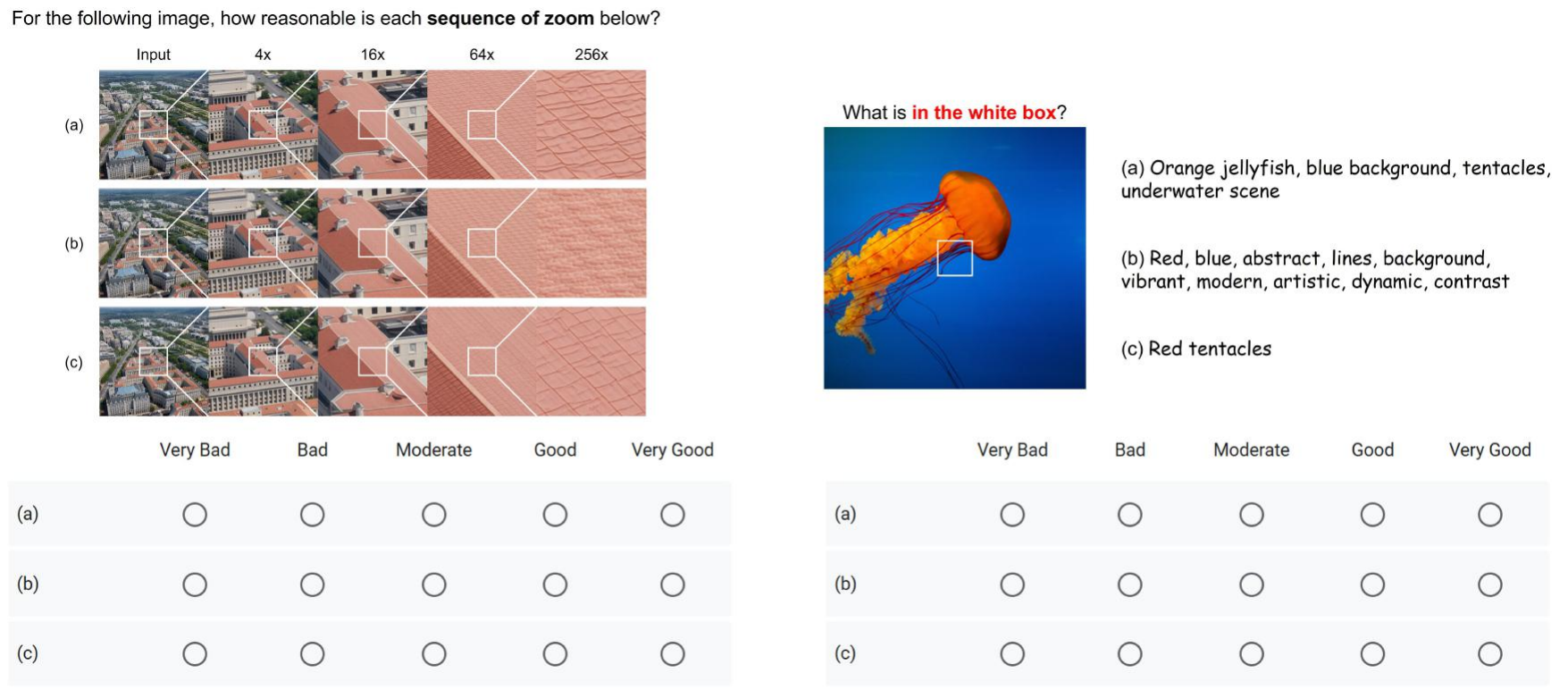}
    \caption{Example questions used for the MOS test.
    (Left) \textbf{Human-Preferred Image Generation.} Users were first given the instruction: `In this survey, several samples will be given where we zoom into the center of the image. For each sequence of zoom, please rate how preferable the zoom is. (i.e., If we zoom into this input image, will the images look like this sequence?)'
    (Right) \textbf{Human-Preferred Text Generation.} Users were first given the instruction: `In this section, several samples will be given where we try to explain the center of the image. For each image, please rate how preferable the explanation is. (i.e., Does the text explanation well explain what is in the white box?)'
    }
    \label{fig:userstudy}
\end{figure}

\begin{figure}[h]
    \centering
    \includegraphics[width=\linewidth]{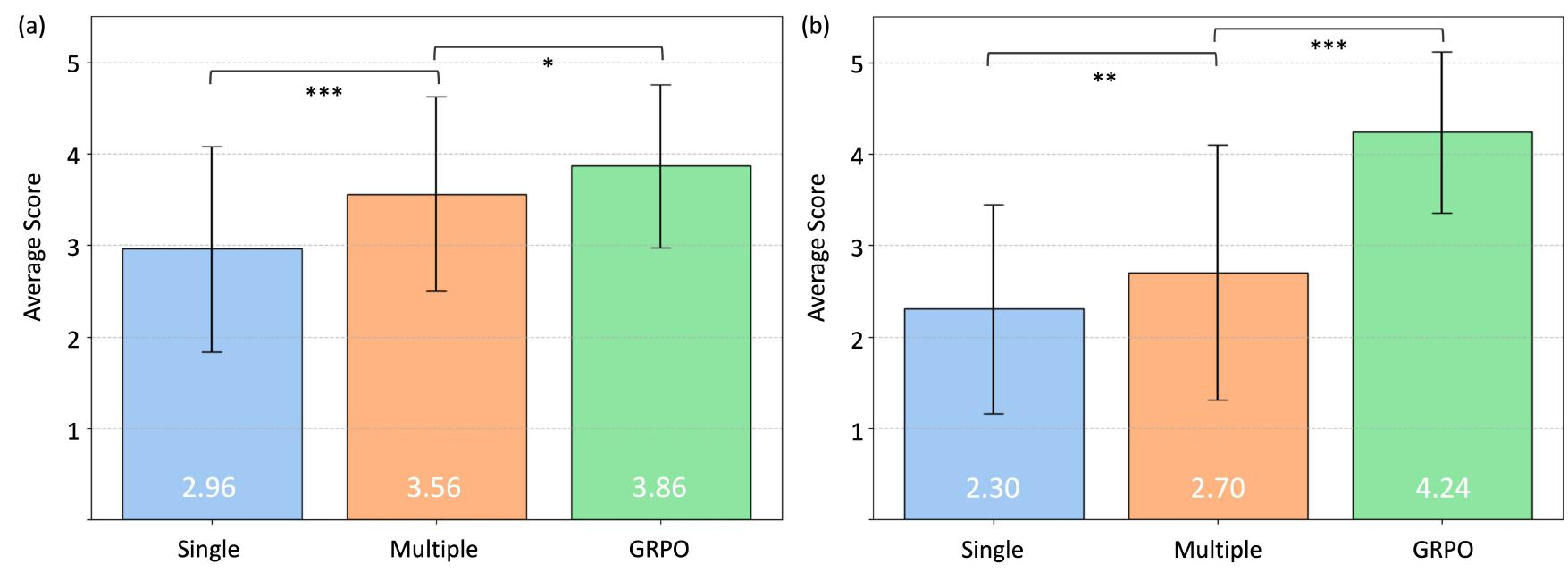}
    \caption{(a) Mean opinion scores for image generation. (b) Mean opinion scores for text generation. The scores on each bar denote the means and the error bars represent standard deviation. Significance of scores are denoted as, *: $p<0.05$, **: $p<0.01$, ***: $p<0.001$.}
    \label{fig:user}
\end{figure} 

\section{Additional Results for Performing CoZ with Open-Source OSEDiff}
\label{sec:appendix-osediffv2}
We further prove the applicability of our CoZ framework with the open-source OSEDiff~\cite{wu2024one} model (leveraging Stable Diffusion v2.1 backbone) available at \url{https://github.com/cswry/OSEDiff}. Quantitative comparison on DIV2K, DIV8K training datasets are provided in Tab.~\ref{tab:quant-osediff} and example qualitative results are provided in Fig.~\ref{fig:samples-osediff}. Results show that CoZ is robust and shows good performance when utilizing OSEDiff that leverages the Stable Diffusion v2.1 model as its backbone.

\begin{table}[h]
    \centering
    \caption{Quantitative comparison using the open-source OSEDiff. \textbf{Best}, \underline{Second-Best}.}
    \resizebox{\linewidth}{!}{
        \begin{tabular}{ll|cccc|cccc}
        \toprule
        && \multicolumn{4}{c}{\textbf{DIV2K}} & \multicolumn{4}{c}{\textbf{DIV8K}} \\
        Scale & Method & NIQE$\downarrow$ & MUSIQ$\uparrow$ & MANIQA$\uparrow$ & CLIPIQA$\uparrow$ & NIQE$\downarrow$ & MUSIQ$\uparrow$ & MANIQA$\uparrow$ & CLIPIQA$\uparrow$ \\
        
        \midrule
        4$\times$
        & NN Interpolation
        & 12.1252 & 39.96 & 0.3396 & 0.2630 & 13.1984 & 40.26 & 0.3472 & 0.2672 \\
        & Direct SR
        & 4.7572 & 69.26 & \underline{0.6366} & 0.7266 & 4.8659 & \underline{68.16} & \underline{0.6349} & 0.7198 \\
        & CoZ (Null)
        & \underline{4.7295} & \underline{69.34} & 0.6359 & \underline{0.7272} & \textbf{4.8174} & 68.11 & 0.6332 & 0.7184 \\
        & CoZ (DAPE)
        & 4.7577 & 69.26 & \underline{0.6366} & 0.7265 & 4.8662 & \underline{68.16} & \textbf{0.6350} & \underline{0.7199} \\
        \rowcolor{Gray}
        & CoZ (VLM)
        & \textbf{4.7241} & \textbf{69.42} & \textbf{0.6368} & \textbf{0.7279} & \underline{4.8437} & \textbf{68.31} & 0.6346 & \textbf{0.7224} \\

        \midrule
        16$\times$
        & NN Interpolation
        & 22.1215 & 24.01 & 0.3378 & 0.2346 & 22.2744 & 24.94 & 0.3465 & 0.2585 \\
        & Direct SR
        & 6.9951 & 51.88 & 0.5361 & 0.6206 & 7.4394 & 51.65 & 0.5472 & 0.6300 \\
        & CoZ (Null)
        & \underline{6.5369} & \underline{61.86} & 0.5776 & \textbf{0.6988} & \underline{6.7363} & \underline{60.76} & 0.5842 & 0.6919 \\
        & CoZ (DAPE)
        & 6.5628 & 61.47 & \underline{0.5799} & 0.6899 & 6.7985 & 60.58 & \underline{0.5888} & \underline{0.6926} \\
        \rowcolor{Gray}
        & CoZ (VLM)
        & \textbf{6.5254} & \textbf{62.05} & \textbf{0.5801} & \underline{0.6958} & \textbf{6.7348} & \textbf{61.11} & \textbf{0.5904} & \textbf{0.6978} \\

        \midrule
        64$\times$
        & NN Interpolation
        & 27.4051 & 37.69 & 0.3803 & 0.3690 & 27.7533 & 37.13 & 0.3861 & 0.3837 \\
        & Direct SR
        & 15.6269 & 21.56 & 0.4255 & 0.4943 & 15.8252 & 22.02 & 0.4316 & 0.5059 \\
        & CoZ (Null)
        & 8.9369 & \underline{54.46} & 0.5598 & \textbf{0.6672} & \underline{8.9645} & \underline{53.48} & 0.5643 & \underline{0.6655} \\
        & CoZ (DAPE)
        & \underline{8.8681} & 53.50 & \underline{0.5622} & 0.6553 & 9.0221 & 52.76 & \underline{0.5687} & 0.6616 \\
        \rowcolor{Gray}
        & CoZ (VLM)
        & \textbf{8.8259} & \textbf{54.84} & \textbf{0.5645} & \underline{0.6615} & \textbf{8.8553} & \textbf{53.84} & \textbf{0.5716} & \textbf{0.6677} \\

        \midrule
        256$\times$
        & NN Interpolation
        & 34.8461 & 27.01 & 0.4179 & 0.5259 & 37.2612 & 26.98 & 0.4184 & 0.5299 \\
        & Direct SR
        & 15.6688 & 26.37 & 0.4593 & 0.5203 & 15.9510 & 26.17 & 0.4574 & 0.5231 \\
        & CoZ (Null)
        & 11.0907 & \underline{47.14} & \underline{0.5441} & \underline{0.6223} & 11.0661 & \underline{47.09} & 0.5439 & 0.6297 \\
        & CoZ (DAPE)
        & \underline{11.0014} & 45.81 & 0.5440 & 0.6162 & \underline{10.9251} & 46.50 & \underline{0.5475} & \underline{0.6345} \\
        \rowcolor{Gray}
        & CoZ (VLM)
        & \textbf{10.8156} & \textbf{48.22} & \textbf{0.5495} & \textbf{0.6257} & \textbf{10.7086} & \textbf{48.25} & \textbf{0.5518} & \textbf{0.6384} \\
        
        \bottomrule
        \end{tabular}
    }
    \label{tab:quant-osediff}
\end{table}

\begin{figure}[h]
    \centering
    \includegraphics[width=\linewidth]{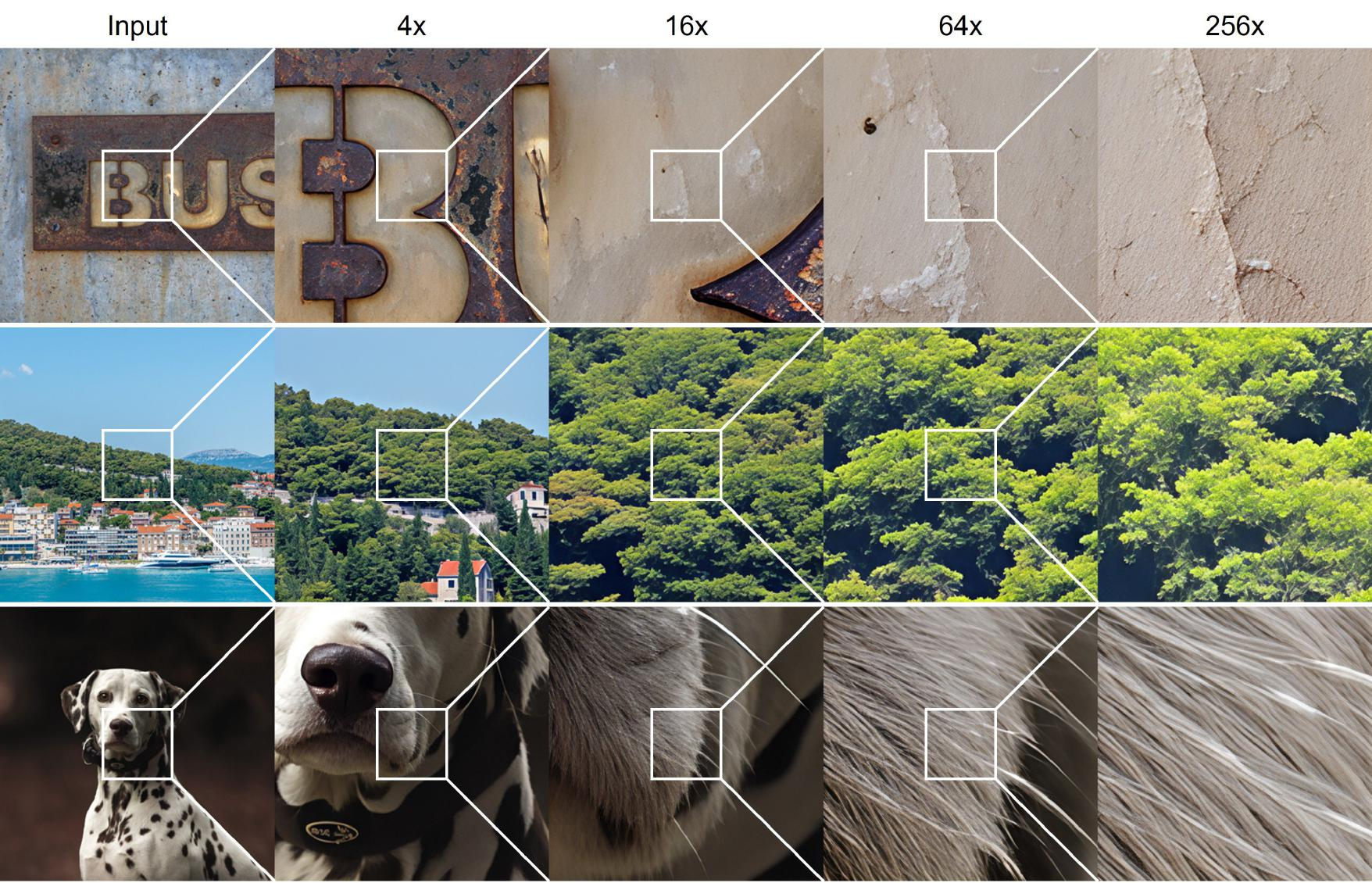}
    \caption{Qualitative results for performing CoZ with the open-source OSEDiff (leveraging Stable Diffusion v2.1 as the diffusion backbone). The GRPO fine-tuned VLM is used as the prompt extractor.}
    \label{fig:samples-osediff}
\end{figure}

\section{Using Qwen2.5-VL-7B-Instruct as the Critic VLM}
\label{sec:appendix-qwencritic}
Quantitative results using Qwen2.5-VL-7B-Instruct as the critic VLM is provided in Tab.~\ref{tab:qwencritic}. All settings are identical to Tab.~\ref{tab:quantitative} except for choice of critic VLM.

\begin{table}[h]
    \centering
    \caption{Quantitative results using Qwen2.5-VL-7B-Instruct as the critic VLM.}
    \resizebox{\linewidth}{!}{
        \begin{tabular}{ll|cccc|cccc}
        \toprule
        && \multicolumn{4}{c}{\textbf{DIV2K}} & \multicolumn{4}{c}{\textbf{DIV8K}} \\
        Scale & Method & NIQE$\downarrow$ & MUSIQ$\uparrow$ & MANIQA$\uparrow$ & CLIPIQA$\uparrow$ & NIQE$\downarrow$ & MUSIQ$\uparrow$ & MANIQA$\uparrow$ & CLIPIQA$\uparrow$ \\
        
        \midrule
        4$\times$
        & Direct SR
        & 4.7320 & \textbf{67.00} & \textbf{0.6344} & \textbf{0.7005} & 4.8631 & \textbf{66.29} & \textbf{0.6359} & \textbf{0.6946} \\
        \rowcolor{Gray}
        & CoZ (VLM)
        & \textbf{4.6546} & 66.89 & 0.6336 & 0.6993 & \textbf{4.7947} & 66.19 & 0.6349 & 0.6939 \\

        \midrule
        16$\times$
        & Direct SR
        & 7.2183 & 51.25 & 0.5406 & 0.6080 & 7.5855 & 50.17 & 0.5473 & 0.6035 \\
        \rowcolor{Gray}
        & CoZ (VLM)
        & \textbf{6.2600} & \textbf{58.44} & \textbf{0.5955} & \textbf{0.6520} & \textbf{6.5513} & \textbf{57.62} & \textbf{0.5996} & \textbf{0.6572} \\

        \midrule
        64$\times$
        & Direct SR
        & 16.5915 & 22.54 & 0.3995 & 0.4309 & 16.5874 & 22.97 & 0.4069 & 0.4451 \\
        \rowcolor{Gray}
        & CoZ (VLM)
        & \textbf{7.8554} & \textbf{52.12} & \textbf{0.5824} & \textbf{0.6229} & \textbf{8.0089} & \textbf{51.07} & \textbf{0.5826} & \textbf{0.6235} \\

        \midrule
        256$\times$
        & Direct SR
        & 16.1749 & 28.89 & 0.4470 & 0.5196 & 15.8667 & 28.90 & 0.4464 & 0.5256 \\
        \rowcolor{Gray}
        & CoZ (VLM)
        & \textbf{8.9237} & \textbf{48.60} & \textbf{0.5786} & \textbf{0.5948} & \textbf{8.8982} & \textbf{48.06} & \textbf{0.5767} & \textbf{0.6040} \\
        
        \bottomrule
        \end{tabular}
    }
    \label{tab:qwencritic}
\end{table}

\section{Zooming into Overlapping Regions}

Cosine similarity measurements of overlapping patches across scales are provided in Tab.~\ref{tab:zoom-overlap}. Specifically, for the first 500 images of the DIV2K dataset, we create overlapping patch pairs with overlapping amounts of 50\% (\textit{i.e.}, the right half of one patch and the left half of the other patch represent the same region in the original image). The pixels of overlapping patches are flattened into a 1-dimensional vector and their cosine similarity are calculated. We evaluate for patch pairs across different magnifications (multiple recursions), and observe error accumulation. Cosine similarity for non-overlapping patches are also reported as reference. Results show that cosine similarity of overlapping patches are high, even for the highest magnification of $256\times$.

\begin{table}[h]
    \centering
    \caption{Cosine similarity for overlapping/non-overlapping regions.}
    \begin{tabular}{l|ccc|ccc}
        \toprule
        & \multicolumn{3}{c}{\textbf{Overlapping Regions}} & \multicolumn{3}{c}{\textbf{Non-overlapping Regions}} \\
        Scale & Mean & Median & Minimum & Mean & Median & Minimum \\

        \midrule
        4$\times$
        & 0.9977 & 0.9982 & 0.9864 & 0.7951 & 0.8196 & 0.3077 \\
        16$\times$
        & 5.5208 & 6.6210 & 0.5387 & 0.5535 & 5.5940 & 0.5346 \\
        64$\times$
        & 4.9803 & 6.6210 & 0.6361 & 0.6596 & 5.0305 & 0.6328 \\
        256$\times$
        & 4.8637 & 6.6210 & 0.6459 & 0.6835 & 4.9414 & 0.6405 \\
        
        \bottomrule
    \end{tabular}
    \label{tab:zoom-overlap}
\end{table}

\section{Full Image Super-Resolution}

Chain-of-Zoom can be applied to super-resolution of full images by using the same tiling method used in prior works (\textit{e.g.}, SeeSR, OSEDiff). Specifically, we use tiling of VAE and tiling of latents as described below to produce full-image super-resolution results without boundary artifacts.

\begin{enumerate}
    \item A given LR image is resized to the target resolution and VAE encoding is done in tiles, allowing encoding to be performed even in settings of limited GPU memory.
    \item The encoded (low-resolution) latent is tiled into overlapping patches. For latent sizes of $64\times64$, we find overlaps of 16 to work sufficiently well.
    \item Each low-resolution patch of $64\times64$ passes through the super-resolution network to become high-resolution patches, each guided by patch-specific prompts generated by the prompt-extractor VLM. Note that this step requires multiple passes of the VLM, a computational bottleneck to be solved by future work.
    \item The output high-resolution patches are multiplied by Gaussian weights in overlapping regions for smooth transposition between patches, and then combined to create the final high-resolution image.
    \item The whole process is repeated as \textit{scale autoregression} to achieve higher resolutions.
\end{enumerate}

\clearpage
\section{Additional Qualitative Results}

Additional qualitative results of extreme super-resolution by CoZ are provided below.

\begin{figure}[h]
    \centering
    \includegraphics[width=\linewidth]{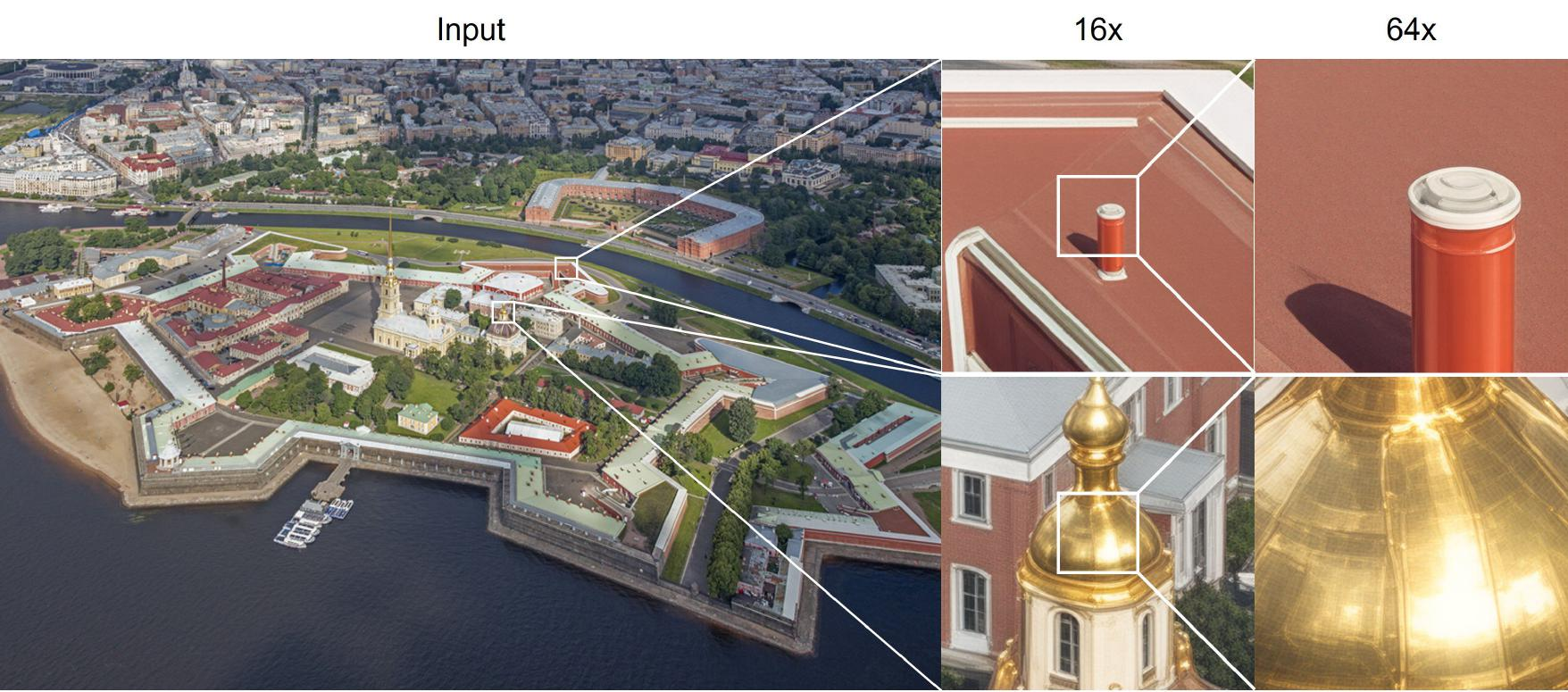}
    \includegraphics[width=\linewidth]{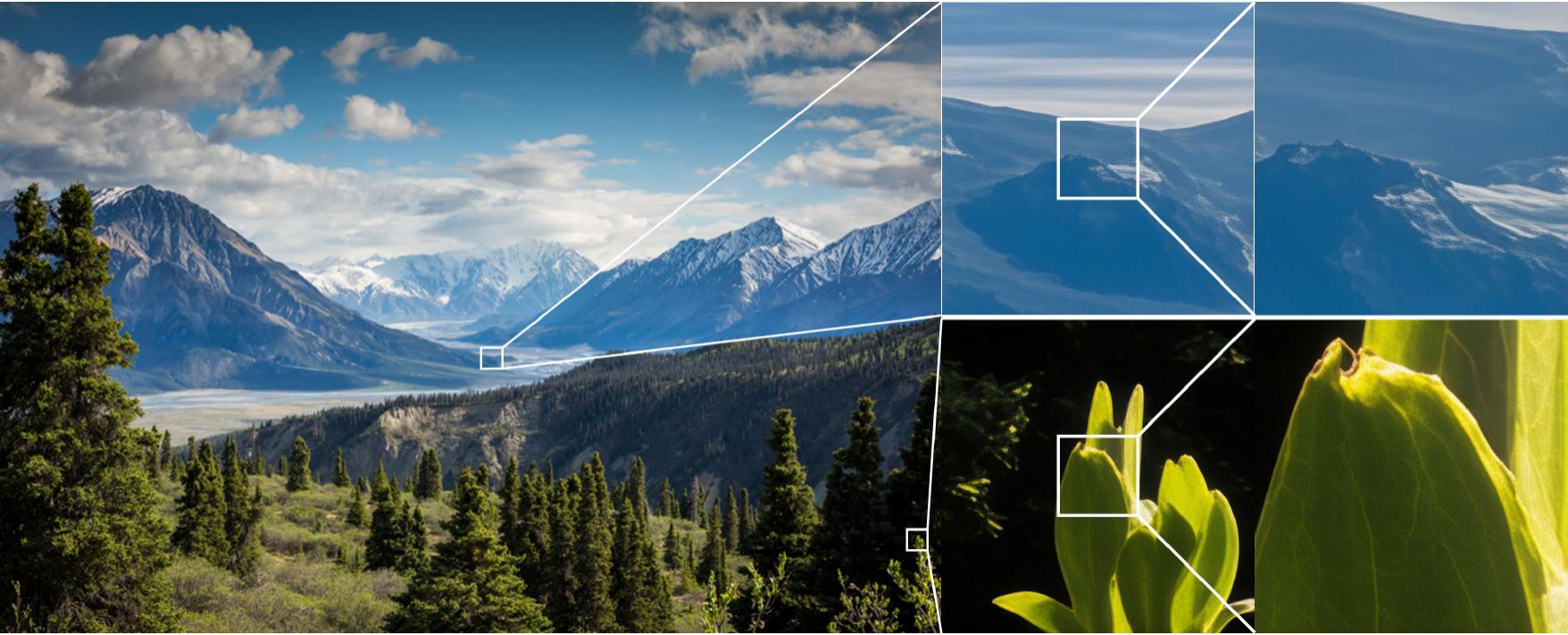}
    \includegraphics[width=\linewidth]{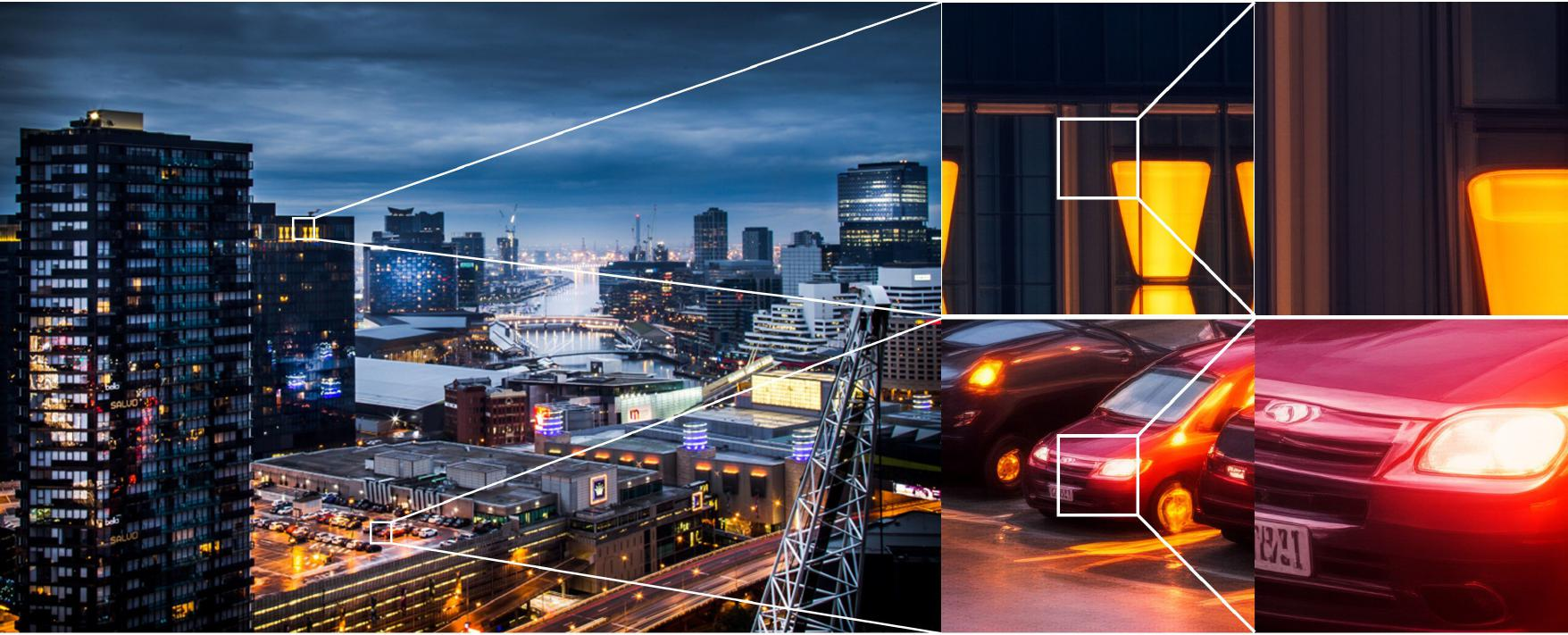}
    \caption{Extreme super-resolution of photorealistic images by CoZ up to $64\times$ magnification.}
    \label{fig:full}
\end{figure}

\begin{figure}[h]
    \centering
    \includegraphics[width=\linewidth]{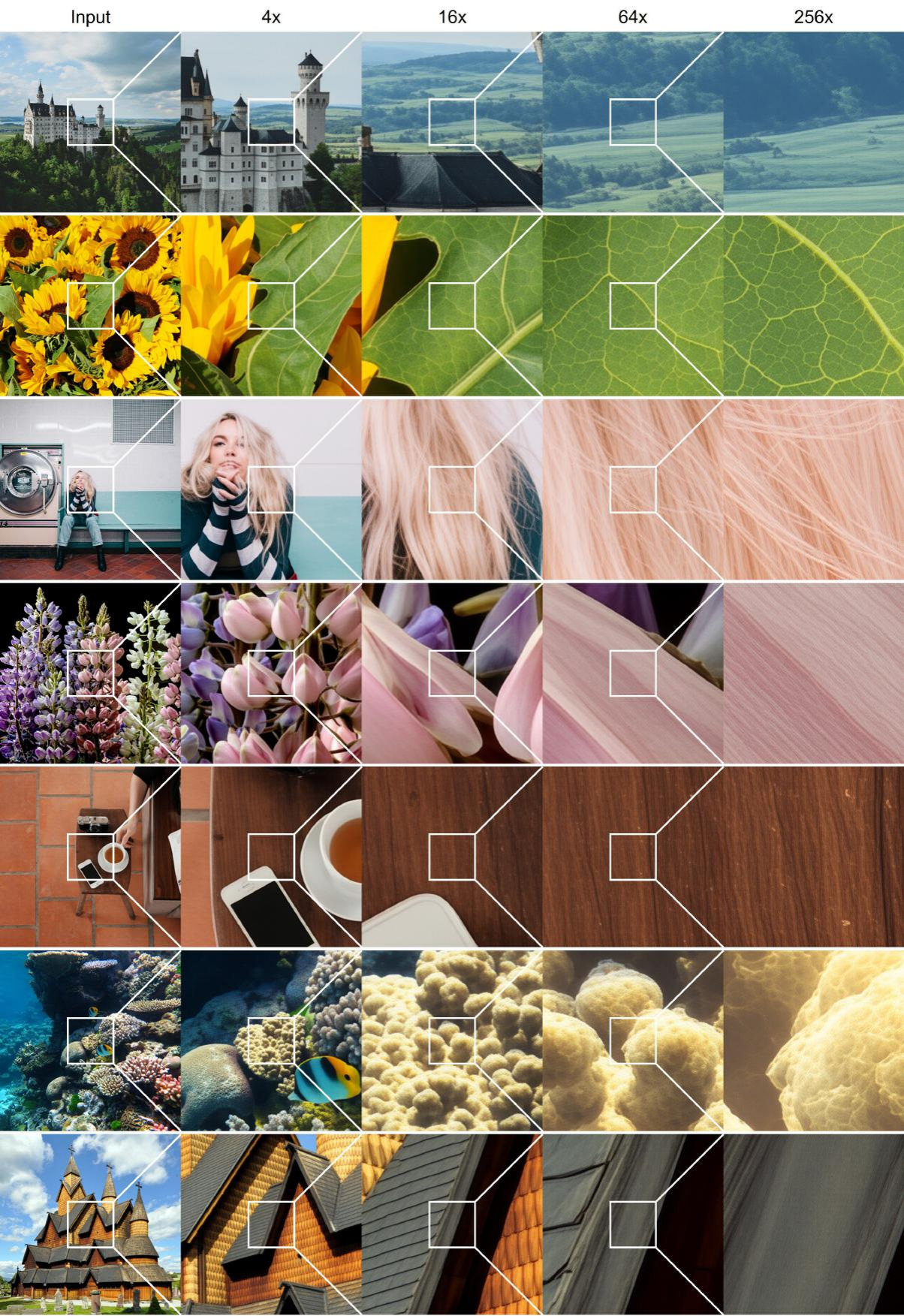}
    \caption{Extreme super-resolution of photorealistic images by CoZ up to $256\times$ magnification.}
    \label{fig:suppl-1}
\end{figure}

\begin{figure}[h]
    \centering
    \includegraphics[width=\linewidth]{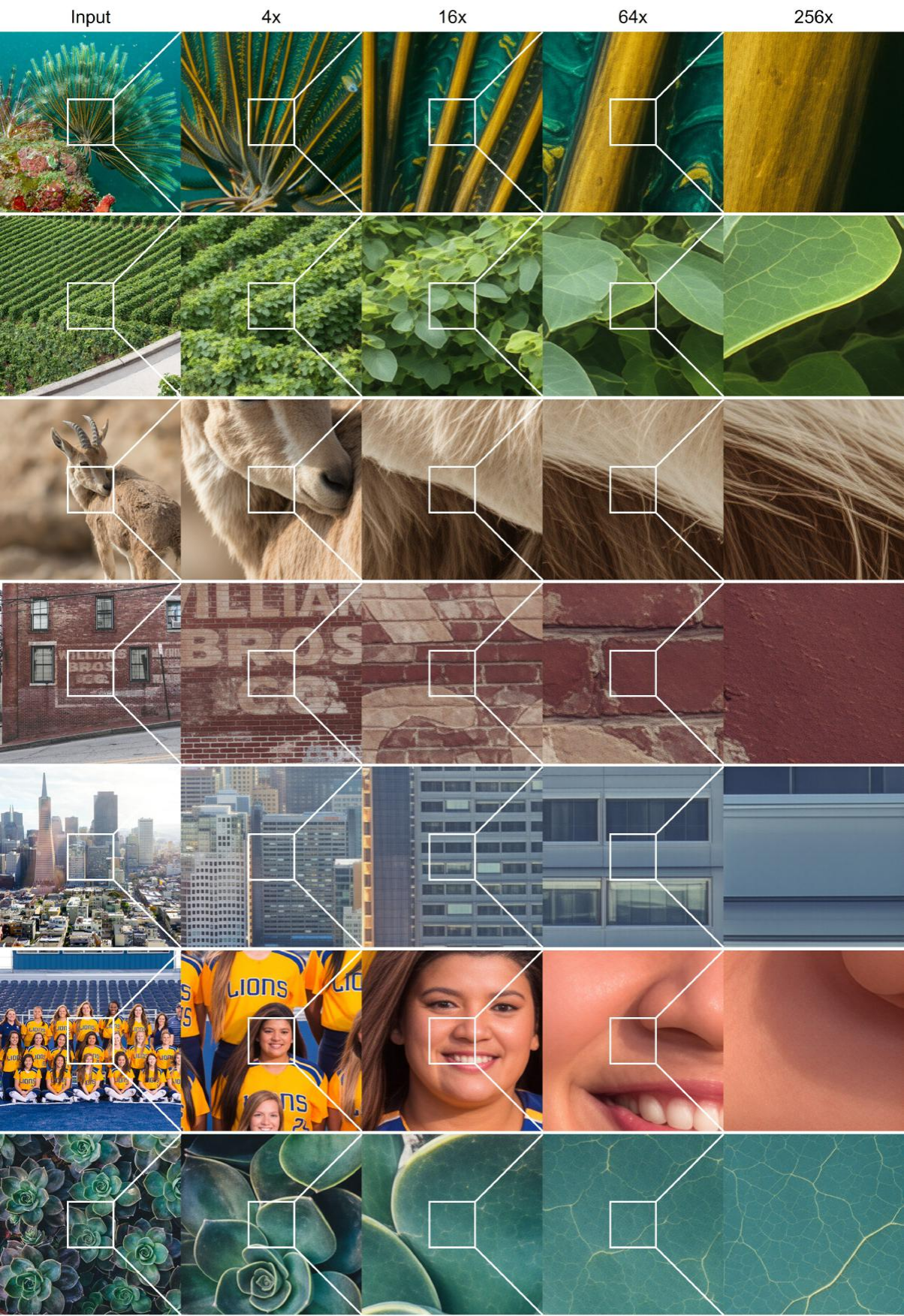}
    \caption{Extreme super-resolution of photorealistic images by CoZ up to $256\times$ magnification.}
    \label{fig:suppl-2}
\end{figure}

\begin{figure}[h]
    \centering
    \includegraphics[width=\linewidth]{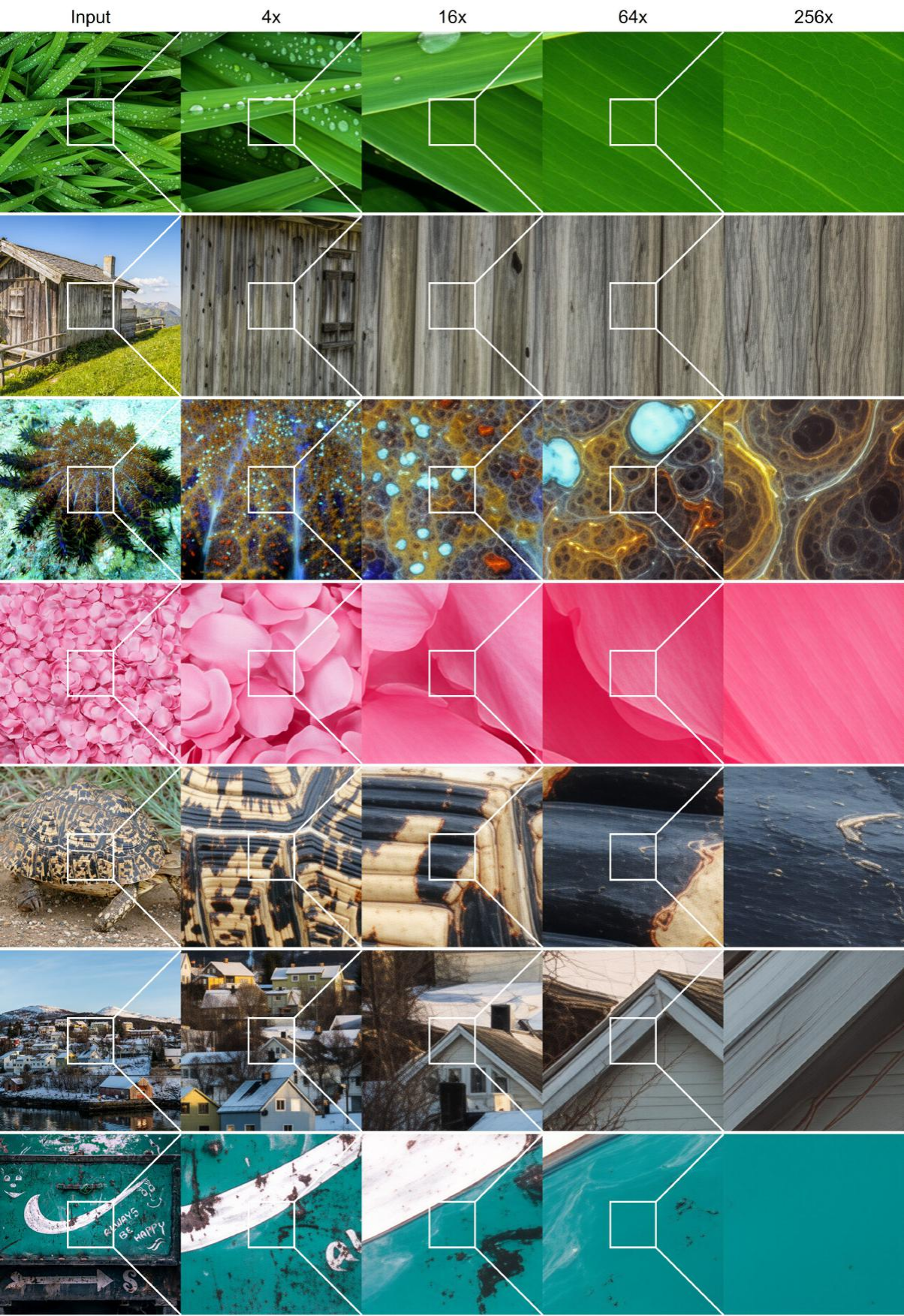}
    \caption{Extreme super-resolution of photorealistic images by CoZ up to $256\times$ magnification.}
    \label{fig:suppl-3}
\end{figure}

\begin{figure}[h]
    \centering
    \includegraphics[width=\linewidth]{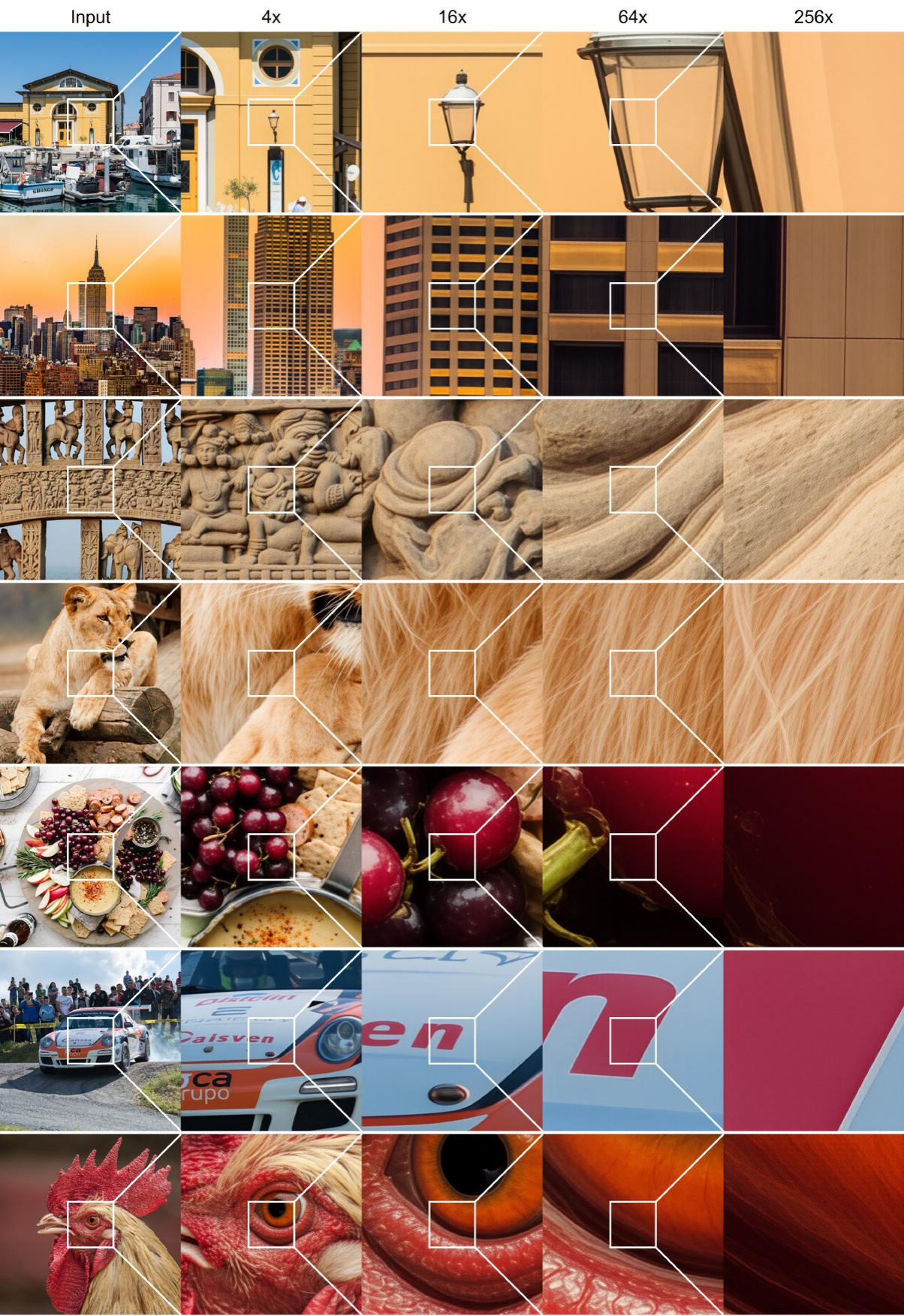}
    \caption{Extreme super-resolution of photorealistic images by CoZ up to $256\times$ magnification.}
    \label{fig:suppl-4}
\end{figure}

\clearpage
\section{Example Failure Modes of VLM before Fine-Tuning}
\subsection{Repetition}

\begin{tcolorbox}[
    colback=white,
    colframe=black!75!white,
    fonttitle=\bfseries,
    arc=0mm,
    boxsep=1mm,
    left=2mm,
    right=2mm,
    top=2mm,
    bottom=2mm
]
    \begin{center}
        \includegraphics[width=0.2\linewidth]{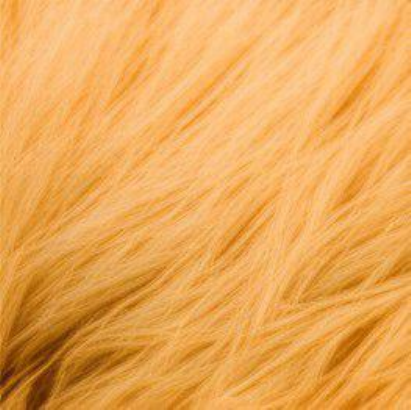}
        \includegraphics[width=0.2\linewidth]{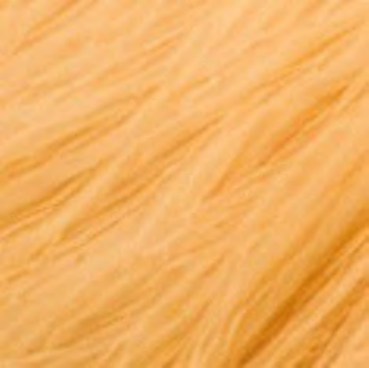}
    \end{center}

    \raggedright
    The second image is a zoom-in of the first image.
    Based on this knowledge, what is in the second image? Give me a set of words. \\

    \noindent
    \textcolor{black!75!white}{\rule{\linewidth}{0.4pt}} 
    \smallskip
    
    \textbf{RESPONSES}
    \medskip

    \noindent 
    \begin{tabular}{@{}ll@{}} 
        & \\
        \textbf{Qwen2.5-VL-3B-Instruct} & fur texture orange background animal fur close-up pattern \\
        & \textcolor{red}{texture orange fur texture orange fur background orange fur} \\
        & \textcolor{red}{texture orange fur background orange fur texture orange fur} \\
        & \textcolor{red}{texture orange fur texture orange fur texture orange fur texture} \\
        & \textcolor{red}{orange fur texture orange fur texture orange fur texture orange} \\
        & \textcolor{red}{fur texture orange fur texture orange fur texture orange fur ...} \\
        & \\
        \textbf{+ GRPO Training} & fur \\
    \end{tabular}

\end{tcolorbox}

\subsection{Unwanted Phrases}

\begin{tcolorbox}[
    colback=white,
    colframe=black!75!white,
    fonttitle=\bfseries,
    arc=0mm,
    boxsep=1mm,
    left=2mm,
    right=2mm,
    top=2mm,
    bottom=2mm
]
    \begin{center}
        \includegraphics[width=0.2\linewidth]{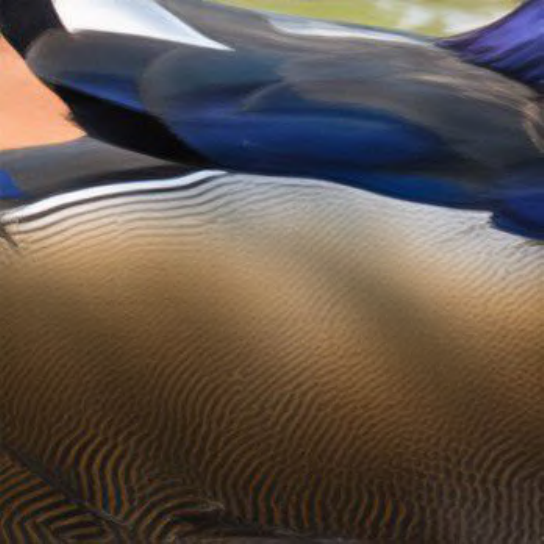}
        \includegraphics[width=0.2\linewidth]{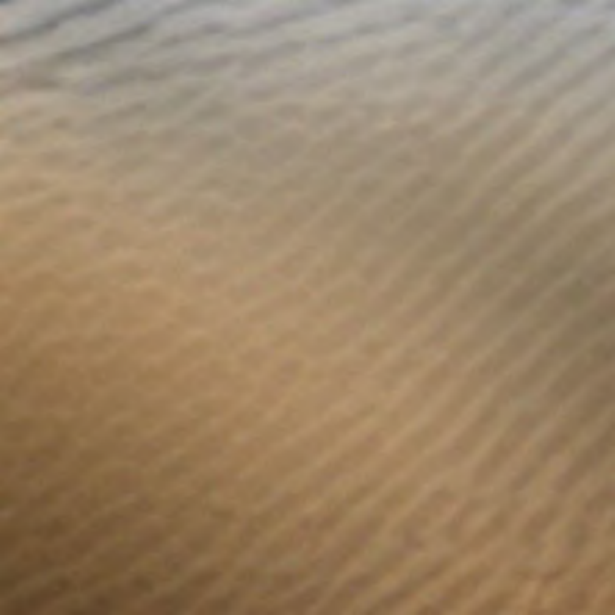}
    \end{center}

    \raggedright
    The second image is a zoom-in of the first image.
    Based on this knowledge, what is in the second image? Give me a set of words. \\

    \noindent
    \textcolor{black!75!white}{\rule{\linewidth}{0.4pt}} 
    \smallskip
    
    \textbf{RESPONSES}
    \medskip

    \noindent
    \begin{tabular}{@{}ll@{}}
        & \\
        \textbf{Qwen2.5-VL-3B-Instruct} & The \textcolor{red}{second image} shows a close-up view of a surface with a \\
        & textured pattern. The texture appears to be a combination of \\
        & smooth and slightly raised areas, giving it a somewhat wavy or \\
        & ripple-like appearance. The color gradient ranges from a lighter \\
        & shade at the top to a darker shade at the bottom, creating a sense \\
        & of depth and dimension. \\
        & \\
        \textbf{+ GRPO Training} & feathers \\
    \end{tabular}

\end{tcolorbox}

\subsection{Suboptimal Results}

\begin{tcolorbox}[
    colback=white,
    colframe=black!75!white,
    fonttitle=\bfseries,
    arc=0mm,
    boxsep=1mm,
    left=2mm,
    right=2mm,
    top=2mm,
    bottom=2mm
]
    \begin{center}
        \includegraphics[width=0.2\linewidth]{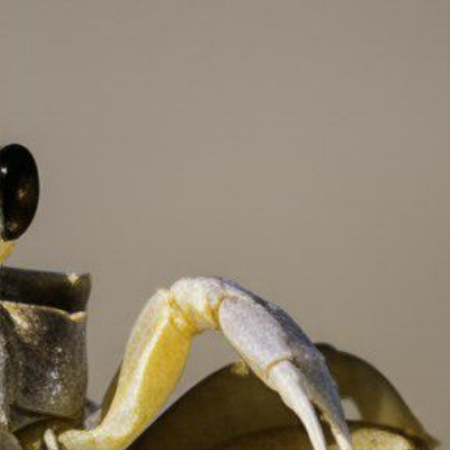}
        \includegraphics[width=0.2\linewidth]{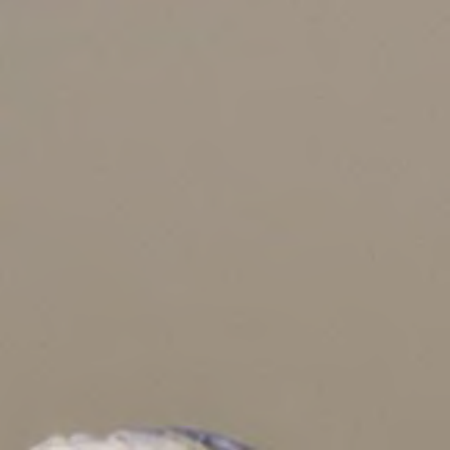}
    \end{center}

    \raggedright
    The second image is a zoom-in of the first image.
    Based on this knowledge, what is in the second image? Give me a set of words. \\

    \noindent
    \textcolor{black!75!white}{\rule{\linewidth}{0.4pt}} 
    \smallskip
    
    \textbf{RESPONSES}
    \medskip

    \noindent 
    \begin{tabular}{@{}ll@{}} 
        & \\
        \textbf{Qwen2.5-VL-3B-Instruct} & \textcolor{red}{ant leg} \\
        & \\
        \textbf{+ GRPO Training} & crab claw \\
    \end{tabular}

\end{tcolorbox}

\end{document}